\newtheorem{thm}{Theorem}
\newtheorem{lem}{Lemma}
\newtheorem{rem}{Remark}
\newtheorem{assumption}{Assumption}
\newcommand*{\QEDB}{\hfill\ensuremath{\square}}%
\newcommand{\bJ}{\bm{J}}
\newcommand{\bT}{\bm{T}}
\newcommand{\bM}{\bm{M}}
\newcommand{\bC}{\bm{C}}
\newcommand{\bD}{\bm{D}}
\newcommand{\bK}{\bm{K}}
\newcommand{\bg}{\bm{g}}
\newcommand{\bxi}{\bm{\xi}}
\newcommand{\bbxi}{\bar{\bm{\xi}}}
\newcommand{\bEta}{\bm{\eta}}
\newcommand{\bbM}{\bar{\bm{M}}}
\newcommand{\bbC}{\bar{\bm{C}}}
\newcommand{\bq}{\bm{q}}
\newcommand{\br}{\bm{r}}
\newcommand{\bv}{\bm{v}}
\newcommand{\bV}{\bm{V}}
\newcommand{\bw}{\bm{w}}
\newcommand{\bu}{\bm{u}}
\newcommand{\bx}{\bm{x}}
\newcommand{\by}{\bm{y}}
\newcommand{\bphi}{\bm{\phi}}
\newcommand{\btau}{\bm{\tau}}
\newcommand{\bzero}{\bm{0}}
\newcommand{\bI}{\bm{I}}
\newcommand{\bR}{\bm{R}}
\newcommand{\bQ}{\bm{Q}}
\newcommand{\bF}{\bm{F}}
\newcommand{\bLambda}{\bm{\Lambda}}
\newcommand{\bGamma}{\bm{\Gamma}}
\title{\LARGE \bf
Passive Compliance Control of Aerial Manipulators% \todo{with/under}\\Environmental Interaction
%Passive Compliance Control for Aerial Manipulators
%Compliant Aerial Tele-Manipution with Validation using Hardware-in-loop Simulator
}
\author{Min Jun Kim, Ribin Balachandran, Marco De Stefano, Konstantin Kondak, and Christian Ott% <-this % stops a space
%\thanks{*This work was not supported by any organization}% <-this % stops a space
\thanks{The funding of the European Commission to the AEROARMS project under the H2020 Programme (Grant Agreement 644271) is acknowledged.}
\thanks{The authors are with Institute of Robotics and Mechatronics, German
Aerospace Center (DLR),  Wessling, Germany. E-mail: minjun.kim@dlr.de}%
}
\begin{document}

\maketitle
\thispagestyle{empty}
\pagestyle{empty}

%%%%%%%%%%%%%%%%%%%%%%%%%%%%%%%%%%%%%%%%%%%%%%%%%%%%%%%%%%%%%%%%%%%%%%%%%%%%%%%%
\begin{abstract}
This paper presents a passive compliance control for aerial manipulators to achieve stable environmental interactions. The main challenge is the absence of actuation along body-planar directions of the aerial vehicle which might be required during the interaction to preserve passivity. The controller proposed in this paper guarantees passivity of the manipulator through a proper choice of end-effector coordinates, and that of vehicle fuselage is guaranteed by exploiting time domain passivity technique. Simulation studies validate the proposed approach.
\end{abstract}

\section{Introduction}

After successful achievements in unmanned aerial vehicle (UAV) studies for non-active missions such as surveillance and remote sensing, aerial manipulation is an emerging research topic. By mounting a manipulator (or multiple manipulators) to a UAV, we can exploit further capabilities of aerial platforms. UAV equipped with a manipulator (UAV-M hereinafter), for instance, allows us to extend the missions to active ones such as grasping and manipulation which may include interaction with environments.

A number of studies have been performed to achieve successful aerial manipulation in various perspectives; e.g., UAV-M design \cite{orsag2013modeling,suarez2016lightweight, ryll20176d, kim2018oscillation}, intelligence \cite{karrer2016real, rossi2017trajectory, pumarola2017pl}, and modeling methodologies \cite{yang2014dynamics,garofalo2015inertially}. In addition to these, control of UAV-M is also extensively studied. For example, \cite{kim2013aerial, huber2013first, kondak2014aerial, yang2014dynamics, tognon2017dynamic,gianluca2018taskspace} studied stability of UAV-M systems without considering manipulation tasks explicitly. \cite{mellinger2011design, fink2011planning, jimenez2013control, orsag2013stability} tackled actual aerial manipulation tasks, but the compliance interaction control was not the main scope of these studies mainly because the UAVs were not equipped with multi degrees of freedom (DoF) robotic manipulators. In contrast, the system considered in this work is equipped with a multi-DoF robotic manipulator as shown in Fig. \ref{fig:heli_manipulator} (see \cite{huber2013first} for more details). To achieve further capabilities of the multi-DOF manipulators, \cite{lippiello2012cartesian} and \cite{lippiello2012exploiting} formulated Cartesian impedance control approaches for UAV-M systems.

%In the line of this research, we propose a passive compliance control to ensure stable interaction with passive environment. Please note that the passivity  plays an important role to guarantee stable interaction with passive environments.

\begin{figure}
	\centering
	\subfigure[]
	{\includegraphics[width=76mm]{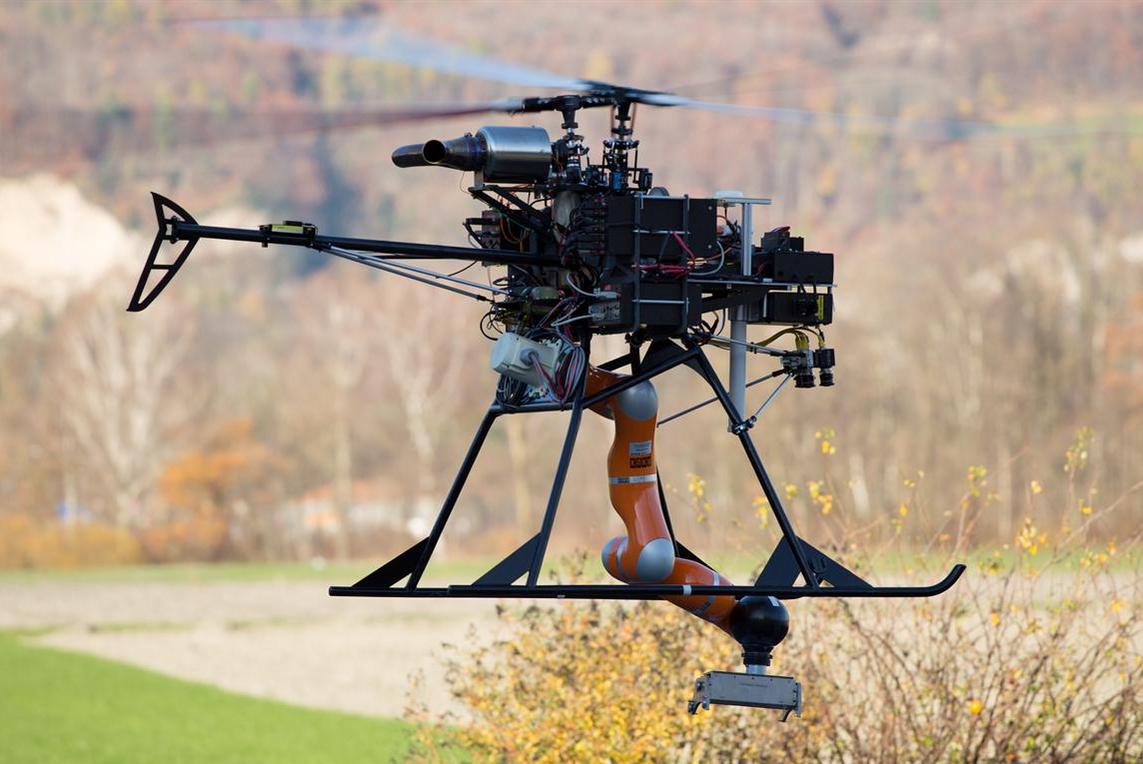}} \\
	\centering
	\subfigure[]
	{\includegraphics[width=80mm]{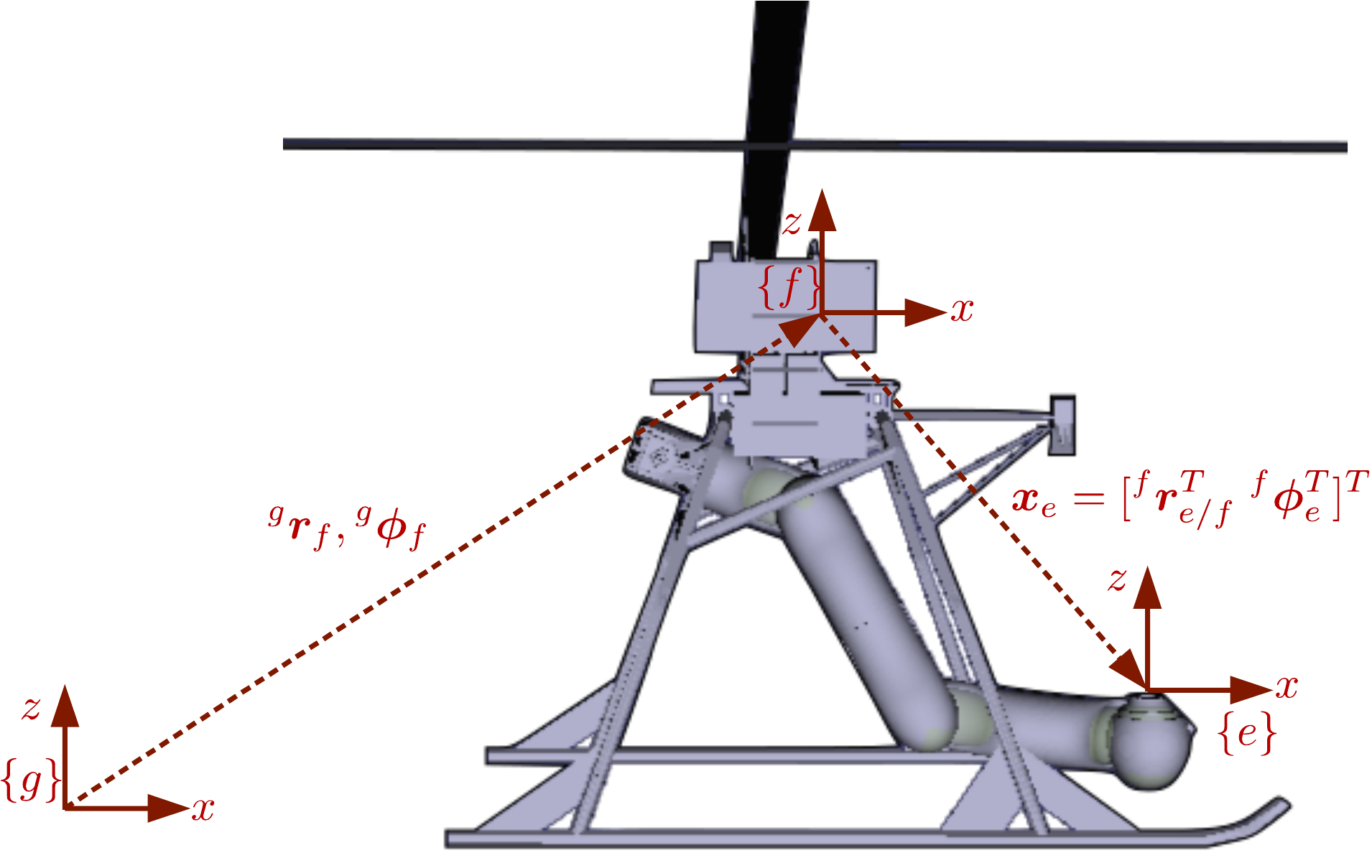}}
	\caption{(a) UAV-M system developed in DLR. (b) Coordinate systems of UAV-M. Note that the system does not have actuation along the body $x,y$ directions (under-actuation).}
\label{fig:heli_manipulator}	
\end{figure}

In this paper, we extend our previous work \cite{kim2018stabilizing} in which stable flight control for UAV-M was studied, however the interaction with the environment was not considered. In particular, we extend the formulation from using joint level control to using end-effector control in order to achieve compliance behavior in the task space. Realizing stable environmental interaction, however, is not trivial when the UAV is under-actuated because the UAV lacks actuation along body linear $x$-, $y$-directions (see Fig. \ref{fig:heli_manipulator}b).\footnote{In fact, fully actuated platforms are recently being studied \cite{rajappa2015modeling,8281444}.}  During the interaction, the interaction force (or torque) propagates through the manipulator to the fuselage. As a result, forces along the linear $x$-,$y$-directions are generated. These forces can not be directly handled due to the lack of actuation and a stable interaction with the environment is not guaranteed.

%The main difficulty arises from the fact that the interaction force (or torque) propagates through the UAV-M body, and results in forces along body linear $x$-, $y$-directions. Since these forces cannot be directly handled due to the lack of actuation, stable interaction becomes nontrivial.

The contribution of this paper is to design a passive compliance control for under-actuated UAV-M systems to ensure stable interactions with the passive environments. Passivity of UAV's $z$-directional translation dynamics and UAV's rotational dynamics is rather straightforward by virtue of the collocated actuations. This paper exploits time domain passivity approach \cite{hannaford2002time,ryu2004stable} in which passivity observer/passivity controller (PO/PC) is used to render the system passive. However, passivity of the compliance controller of the manipulator, which may result in forces along non-actuated directions, is not trivial. In this paper, it will be shown that the passivity is guaranteed if the compliance controller is formulated in the UAV fuselage frame (i.e., using $\bx_e$ in Fig. \ref{fig:heli_manipulator}b). As a result, stability is guaranteed when the controlled UAV-M interacts with passive environments.

The resulting controller has three additional benefits. First, although the UAV dynamics and the manipulator dynamics are highly coupled, they are controlled independently except for the gravity compensation. This may significantly reduce the effort and time for implementation. Second, the resulting control law is almost model-free; the total mass and gravity vector are the only required modeling parameters. Third, the end-effector compliance control is formulated using $\bx_e$ which is expressed in the fuselage frame. For example, this can be used for visual servoing control, where the desired values are usually expressed in the same frame $\{f\}$.

% of which desired value is usually given based on the vision sensors attached on the UAV fuselage. %This is an implementation-friendly formulation because (i) it is easy to calculate, and (ii) the information on the manipulation target is usually given based on the vision sensors attached on the UAV fuselage.

The rest of the paper is organized as follows. Section \ref{sec:modeling} presents the mathematical modeling of the UAV-M. Section \ref{sec:control} presents the passive compliance control of the manipulator and UAV fuselage control with simulation validation in Section \ref{sec:validation}. Section \ref{sec:conclusion} concludes the paper.

\section{Modeling of UAV-Manipulator Systems}
\label{sec:modeling}

%To begin with, the following assumption is made not to distract the main theme of the paper.
Considering the following assumption, this section introduces mathematical modeling of the UAV-M.
\begin{assumption}
	\label{ass:redundancy}	
	This paper considers  non-redundant robot manipulators and assumes that the manipulator is not in singular configurations. In particular, this paper will consider 6-DOF manipulator to exploit full task space. \QEDB%\QEDA%To eliminate redundancy, the 7th joint of LWR is locked in the simulation validation. 
\end{assumption}

In Fig. \ref{fig:heli_manipulator}b, $\{g\}$, $\{f\}$, and $\{e\}$ represent global, UAV fuselage and end-effector frames respectively. The frame $\{f\}$ is located at the center of mass (CoM) of the UAV, not UAV-M. In addition, UAV-M velocity vectors are defined by
\begin{align}
\label{eq:velocities}
&\bxi_q = \left(
\begin{array}{c}
\bV_f \\
\hdashline 	\dot{\bq}_m
\end{array}
\right), \;\;
\bxi_{e} = \left(
\begin{array}{c}
\bV_f \\
\hdashline \bV_e
\end{array}
\right),
\end{align}
where $\bV_{(\cdot)}$ denotes the body twist of the frame $\{(\cdot)\}$. $\bq_m$ is the generalized coordinates of the manipulator. After introducing a commonly used UAV-M dynamics expressed in $\bxi_q$, the coordinate will be transformed into $\bxi_e$.

%Joint velocity $\dot{\bq}_m$ is replaced with the end-effector body twist in $\bxi_e$.

Equation of motion of the UAV-M in the $\bxi_q$ coordinate is given by
\begin{align}
\label{eq:modified_dyn}
\bM(\bq_m) \dot{\bxi}_q + \bC(\dot{\bq}_m,\bxi_q) \bxi_q + \bg({}^{g}\bphi_f, \bq_m) = \btau_b + \bJ_{e}^{T} {}^{e}\bEta_{\text{ext}}
\end{align}
with inertia matrix $\bM$, Coriolis/centrifugal matrix $\bC$, and gravity vector $\bg$.  ${}^{g}\bphi_f$ is the RPY angle associated with the rotation matrix ${}^{g}\bR_{f}$  which represents the rotation from $\{g\}$ to $\{f\}$. $\btau_b \in \Re^{12}$ represents the control command in the body frame, and is given by
\begin{align}
\btau_b =  [0 \; 0 \; f_{\text{th}} \; \btau_{\text{uav}}^T \; \btau_m^T]^T,
\end{align}
where $f_{\text{th}} \in \Re$ is UAV thrust, $\btau_{\text{uav}} \in \Re^{3}$ is the torque around the CoM of UAV, and $\btau_m \in \Re^{6}$ is the joint torque of the manipulator. Note that, by expressing the dynamic model in the body frame, the first two elements of $\btau_b$ are zero. UAV-M external force/moment vector ${}^{e}\bEta_{\text{ext}} \in \Re^{12}$  is given by
\begin{align}
{}^{e}\bEta_{\text{ext}} = \left(
\begin{array}{c}
\bzero_{6\times 1} \\
%{}^{e}\bar{\bR}_{g} {}^{g}\bF_{\text{ext}}
{}^{e}\bF_{\text{ext}}
\end{array}
\right),
\end{align}
where ${}^{e}\bF_{\text{ext}} \in \Re^{6}$ represents the body wrench due to the environmental interaction.
%where ${}^{e}\bar{\bR}_{g} = diag\{{}^{e}\bR_{g},{}^{e}\bR_{g}\} \in \Re^{6 \times 6}$. 

The Jacobian matrix $\bJ_e \in \Re^{12 \times 12}$ defines the relation between the velocity vector $\bxi_{e}$ and $\bxi_q$ by
\begin{align}
\bxi_{e} = 
\underbrace{
	\left[
	\begin{array}{ccc}
	\bI &  \bzero \\ 
	\bJ_{ef} & 	\bJ_{eq}
	\end{array}
	\right]
}_{=\bJ_e}
\bxi_q.
\end{align}
Using this, the UAV-M dynamics (\ref{eq:modified_dyn}) can be rewritten in $\bxi_e$ coordinates as
\begin{align}
\label{eq:modeling_e}
\bLambda \dot{\bxi}_e + \bGamma \bxi_e + \bJ_e^{-T} \bg = \bJ_e^{-T}\btau_b + {}^{e}\bEta_{\text{ext}},
\end{align}
where $\bLambda$ and $\bGamma$ are the inertia and Coriolis/centrifugal matrices in the new coordinate system. Moreover, using
\begin{align}
\label{eq:J_e_inv}
\bJ_e^{-T}= 
\left[
\begin{array}{ccc}
 \bI & -\bJ_{ef}^T \bJ_{eq}^{-T} \\ 
 \bzero & 	\bJ_{eq}^{-T}
\end{array}
\right],
\end{align}
which is valid due to Assumption \ref{ass:redundancy}, (\ref{eq:modeling_e}) can be rewritten as
\begin{align}
\label{eq:modeling_e_re}
\bLambda \dot{\bxi}_e + \bGamma \bxi_e + \bJ_e^{-T} \bg = {}^{e}\bEta_{\text{th}} + {}^{e}\bEta_{\text{uav}} + {}^{e}\bEta_{m} + {}^{e}\bEta_{\text{ext}},
\end{align}
where ${}^{e}\bEta_{\text{th}}, {}^{e}\bEta_{\text{uav}}, {}^{e}\bEta_{m} \in \Re^{12}$ are defined by
\begin{align}
\label{eq:e_eta_uav}
&{}^{e}\bEta_{\text{th}} = 
\left(
\begin{array}{c}
\bzero_{2 \times 1} \\
f_{\text{th}} \\
\bzero_{9 \times 1}
\end{array}
\right)
, \;
{}^{e}\bEta_{\text{uav}} = 
\left(
\begin{array}{c}
\bzero_{3 \times 1} \\
\btau_{\text{uav}} \\
\bzero_{6 \times 1}
\end{array}
\right)
, \\
\label{eq:e_eta_m}
& \qquad \qquad 
{}^{e}\bEta_{m} = 
\left[
\begin{array}{c}
-\bJ_{ef}^T \bJ_{eq}^{-T} \\
\bJ_{eq}^{-T}
\end{array}
\right]
\btau_m.
\end{align}
Note that $f_{\text{th}}$ influences body $z$-direction dynamics, and $\btau_{\text{uav}}$ influences UAV rotational dynamics. However, $\btau_m$ influences the whole UAV-M dynamics.

Later, the end-effector compliance control will be formulated using $\bx_e \in \Re^{6}$ defined by
\begin{align}
\label{eq:x_e_def}
\bx_{e} :=
\left(
\begin{array}{c}
{}^{f}\br_{e/f} \\
{}^{f}\bphi_{e}
\end{array}
\right), \;\;
\dot{\bx}_{e} :=
\left(
\begin{array}{c}
{}^{f}\dot{\br}_{e/f} \\
{}^{f}\dot{\bphi}_{e}
\end{array}
\right),
\end{align}
where ${}^{f}\bphi_{e}$ is the RPY angle associated with ${}^{f}\bR_e$, and ${}^{f}\br_{e/f}$ is the displacement vector of $\{e\}$ with respect to $\{f\}$, represented in $\{f\}$.

For future convenience, notations used throughout the paper are summarized as follows.
\begin{itemize}
%	\item Position and orientation: Frames are omitted if represented in $\{g\}$.
%		\begin{itemize}
%			\item $\br_f = {}^{g}\br_{f/g}$ and $\bphi_f={}^{g}\bphi_f$.
%		\end{itemize}
	\item $\alpha$, $\beta$, $\gamma$: RPY (roll-pitch-yaw) angles.
		\begin{itemize}
			\item ${}^{g}\bphi_f=[\alpha_f \;\; \beta_f \;\; \gamma_f]^T$.
			\item ${}^{f}\bphi_e=[\alpha_e \;\; \beta_e \;\; \gamma_e]^T$.
		\end{itemize}
	\item ${}^{a}\bm{(\cdot)}_{b/c}$: vector $\bm{(\cdot)}$ pointing $\{b\}$ with respect to $\{c\}$, represented in $\{a\}$.
	\item ${}^{g}\br_{(\cdot)}={}^{g}\br_{(\cdot)/g}$: Simplification for position in $\{g\}$.
		\begin{itemize}
			\item ${}^{g}\br_{f}={}^{g}\br_{f/g}$ and ${}^{g}\br_{e}={}^{g}\br_{e/g}$
		\end{itemize}	
%	\item ${}^{a}\bm{(\cdot)}_{b/c}$: vector $\bm{(\cdot)}$ pointing $\{b\}$ with respect to $\{c\}$, represented in $\{a\}$.
%		\begin{itemize}
%%			\item ${}^{g}\br_{e/g}$: Position of the end-effector.
%			\item ${}^{e}\br_{e/f}$: End-effector position with respect to the UAV fuselage, represented in the fuselage frame.
%		\end{itemize}
%	\item ${}^{g}\bm{(\cdot)}_{a}={}^{g}\bm{(\cdot)}_{a/g}$: Subscript is simplified when the vector $\bm{(\cdot)}$ is represented with respect to the global frame $\{g\}$.
%		\begin{itemize}
%			\item For example, ${}^{g}\br_{e}={}^{g}\br_{f/g}$.
%		\end{itemize}
	\item Frames are omitted for body velocities.
		\begin{itemize}
			\item $\bv_f = {}^{f}\bv_{f/g}$ and $\bw_f = {}^{f}\bw_{f/g}$.
			\item Recall body twists $\bV_f$ and $\bV_e$ in (\ref{eq:velocities}).
		\end{itemize}
	\item ${}^{a}\bV_{b/c}=[{}^{a}\bv_{b/c}^T\; {}^{a}\bw_{b/c}^T]^T$.
	\item $\bQ$: Maps Euler angle rate to the angular velocity.
		\begin{itemize}
			\item $\bw_{f} = {}^{f}\bQ_{g} {}^{g}\dot{\bphi}_{f}$.
			\item $\bw_{e} = {}^{e}\bQ_{f} {}^{f}\dot{\bphi}_{e}$.
		\end{itemize}
	\item Vector components: $\bv_{f}=[v_{f,x}\; v_{f,y}\; v_{f,z}]^T$. This rule also applies to the other vectors.
	\item  Inertia matrix can be partitioned as follows.
	\begin{align}
	\bM|_{\text{first $3 \times 12$}} = & \left[ \bM_{tt} \; \bM_{tr} \; \bM_{tm} \right]
	\end{align}
	Here, the subscripts $t$, $r$, and $m$ represent `translational', `rotational', and `manipulator', respectively. Similarly, $\bg_t$ represents the first three elements of $\bg$.
%	\item $\bAd$: Adjoint transformation
%		\begin{itemize}
%			\item ${}^{g}\bV_{e/g}={}^{g}\bAd_e \bV_{e}$
%			\item ${}^{g}\bF_{ext}={}^{g}\bAd_e^{-T} {}^{e}\bF_{ext}={}^{e}\bAd_g^{T} {}^{e}\bF_{ext}$
%		\end{itemize}
\end{itemize}

Finally, the following assumptions on passivity of environment and UAV-M are made.
\begin{assumption}
	\label{ass:passive_env}
	Manipulator may interact with the environment of which the input-output (I/O) pair $(-\bV_e, {}^{e}\bF_{\text{ext}})$ is strictly passive.  \QEDB
\end{assumption} 
\begin{assumption}
	\label{ass:strictly_passive_UAV_M}
	The dynamics of UAV-M is subjected to drag although it is omitted in (\ref{eq:modeling_e_re}) for simplicity. Therefore, the UAV-M dynamics is output strictly passive. \QEDB
\end{assumption}

\section{Passive Compliance Control of UAV-M}
\label{sec:control}

\subsection{Control goal}

This paper tackles a compliance control of aerial manipulator with the following particular scenario. (i) The UAV-M system approaches a target position (free-flight), (ii) and the manipulator end-effector interacts with a passive environment while the UAV is keeping its desired position. As stated earlier, this problem is not trivial as the typical UAV lacks actuations along body $x$- and $y$-directions.  As shown in Fig. \ref{fig:problem_state}a, the forces acting on the end-effector will propagate through the UAV-M body, and will eventually result in body $x$- and $y$-directional forces in the UAV fuselage. Since there is no actuation to counteract these forces, stability of the resulting closed-loop  dynamics cannot be guaranteed. 

%Mathematically speaking, in (\ref{eq:modeling_e_re}), the control input $\btau_m$ influences not only the manipulator dynamics, but also the UAV dynamics: ${}^{e}\bEta_{m}$ in (\ref{eq:e_eta_m} has nonzero elements in UAV body $x$-, $y$-directions.

At this point, it is interesting to note that, in (\ref{eq:modeling_e_re})-(\ref{eq:e_eta_m}), the UAV control inputs (${}^{e}\bEta_{\text{th}}$, ${}^{e}\bEta_{\text{uav}}$) do not directly influence the manipulator dynamics. In contrast, the manipulator control input (${}^{e}\bEta_{m}$) influences the UAV dynamics directly. Keeping these in mind, this paper takes the following control strategy. 
\begin{itemize}
	\item UAV control inputs ${}^{e}\bEta_{\text{th}}$ and ${}^{e}\bEta_{\text{uav}}$ only take care of stabilization of UAV dynamics. Regardless of the manipulator dynamics (which may have environmental interaction), the UAV tries to maintain its desired position. Time domain passivity approach (in particular, PO/PC) will be applied to passivate the UAV controller.
	\item Manipulator control ${}^{e}\bEta_{m}$ is designed to render PD compliance behavior at the end-effector. However, because this control input applies forces along UAV's body $x$, $y$ directions which cannot be handled, ${}^{e}\bEta_{m}$ will be designed to be intrinsically passive (i.e., passive without passivation technique such as PO/PC). Later, it will be shown that the compliance controller is intrinsically passive if it is formulated using $\bx_e$.
\end{itemize}

\begin{figure}
	\centering
	\subfigure[]
	{\includegraphics[scale=0.525]{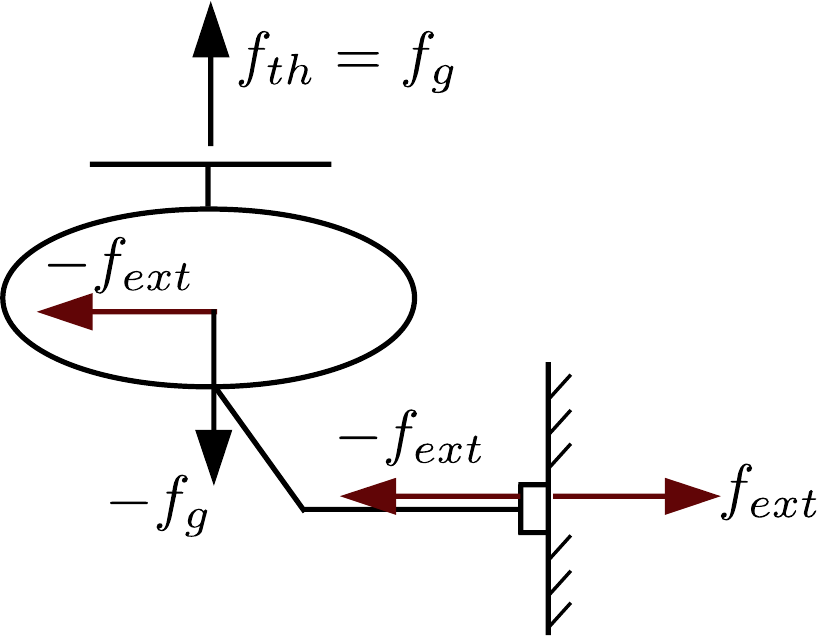}}
	\centering
	\subfigure[]
	{\includegraphics[scale=0.525]{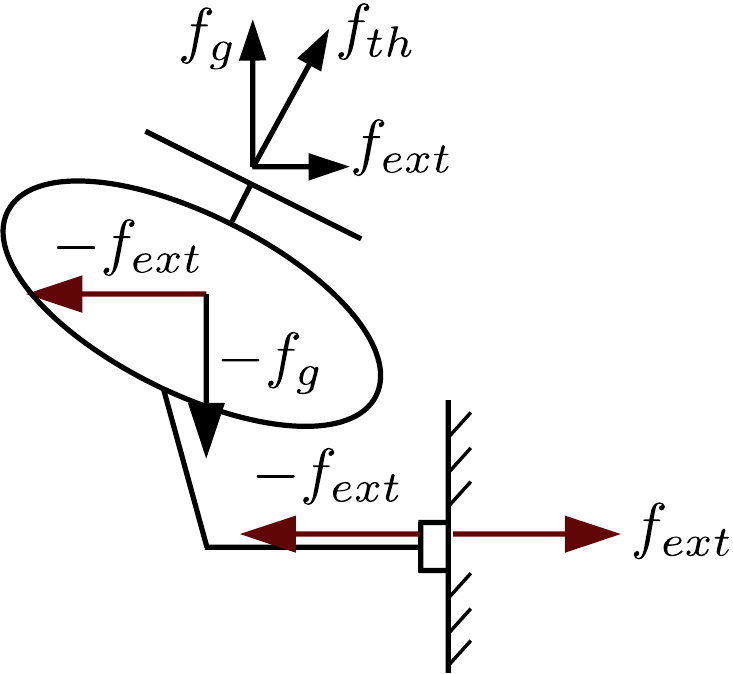}}
	\caption{(a) Due to the lack of actuation in UAV body $x$-direction, environmental interaction may generate energy along this direction. (b) By virtue of passivity, an open I/O port of the controlled UAV-M dynamics can be closed by the passive environment while maintaining stability. As a result, the UAV-M converges to balancing configuration. }
	\label{fig:problem_state}
\end{figure}

More specifically, UAV-M controller should fulfill the following control goals.
\begin{itemize}
	\item Asymptotic stability for the free-flight.\footnote{Based on our scenario of interest, UAV positioning is a global mission to approach the target (using, e.g., GPS), and manipulation is a local mission (using, e.g., vision system on the UAV fuselage).}
	\begin{itemize}
		\item The desired values of UAV are given in the global frame: ${}^{g}\br_f^{des}$, ${}^{g}\bphi_f^{des}=[0 \; 0 \; \gamma_f^{des}]^T$, with zero derivatives. Note that desired roll and pitch angles are zero because, otherwise, the UAV will move.\footnote{
			Unlike most of UAV controllers that update desired orientation in the control loop to achieve zero translational error \cite{ryll20176d, yang2014dynamics, gianluca2018taskspace}, the desired roll and pitch angles of the proposed controller  are always zero. Non-zero translational error perturbs the stable rotational dynamics, which tries to achieve zero roll and pitch angles, in such a way that the error is decreased.
		}
		\item The desired value of the end-effector is given in the fuselage frame: $\bx_e^{des}$ with zero derivative.	
	\end{itemize}
	\item Stable interaction with passive environment. %Coupled stability with the passive environment. %Stable interaction with passive environment.
	\begin{itemize}
		\item The UAV control should satisfy passivity of I/O pairs $(-\bxi_e, {}^{e}\bEta_{\text{th}})$ and $(-\bxi_e, {}^{e}\bEta_{\text{uav}})$.
		\item The compliance control should satisfy passivity of the I/O pair $(-\bxi_e, {}^{e}\bEta_{m})$.
%		\item An I/O pair of the controlled UAV-M can be closed by the passive environment by feedback.
	\end{itemize}
\end{itemize}
It should be remarked that stable interaction in this paper indicates asymptotic stability of the  entire control loop including the passive environment; see also Fig. \ref{fig:overall_control_str}. Therefore, the UAV-M will converge to an equilibrium point at which it can balance the interaction forces, as shown in Fig. \ref{fig:problem_state}b. We would like to emphasize that  the UAV-M automatically converges to this equilibrium point without calculating it.

%We would like to emphasize that the desired equilibrium that balances the interaction force is not calculated. Nevertheless, the UAV-M automatically converges to a certain equilibrium point to balance the interaction force.

%Moreover, the proposed controller is dictated by the following practical reasonings. (i) Despite the strong dynamic coupling between the UAV and the manipulator, they should be controlled independently, which facilitates decentralized control architectures.  (ii) End-effector compliance control is formulated using $\bx_e$ which represents the end-effector position/orientation relative to the UAV fuselage. Note that this is a practical consideration because the desired end-effector position is usually determined by means of vision systems attached to the fuselage. 

\subsection{Control design}
\label{sec:control_design}

This section presents a passive and stable compliance controller for UAV-M, as an extension of our previous study \cite{kim2018stabilizing}. The proposed controller in this paper differs from the one in \cite{kim2018stabilizing} in the following aspects:
\begin{itemize}
	\item Formulation is extended from joint space tracking control to end-effector compliance control.
	\item UAV and manipulator are controlled independently.
	\item The main interest of \cite{kim2018stabilizing} was stability in free-flight. This paper additionally tackles stable environmental interaction.
	\item The position controller of the UAV (fuselage) includes an integral-term in order to maintain its position against interaction forces.
\end{itemize}
In the following, the UAV controller and manipulator controller will be presented.

\subsubsection{UAV control}

The UAV control law is given by:
\begin{align}
\label{eq:control_f_th}
f_{\text{th}} &= -D_z \tilde{v}_{f,z} - D^{pc}_z v_{f,z} +g_{t,z},\\
\label{eq:control_tau_uav}
\btau_{\text{uav}} &=  -\bD_w (\bw_f - \bw_f^{ref}) - \bD_w^{pc}\bw_f + \bg_r,
\end{align}
where $D_z>0$, $\bD_w$ are control gains for UAV control. $D^{pc}_zv_{f,z}$ and $\bD_w^{pc}\bw_f$, which will be defined shortly, are activated to passivate the UAV dynamics only when the passivity condition is violated \cite{hannaford2002time}.

In (\ref{eq:control_tau_uav}), $\bw_f^{ref}$ is obtained by integrating the reference acceleration $\dot{\bw}_f^{ref}$ defined by
\begin{align}
\label{eq:bw_f_dot_ref}
\dot{\bw}_f^{ref} =& -\bD (\bD_{w} \tilde{\bw}_f - \bM_{tr}^T \tilde{\bv}_f)  + \frac{d}{dt}({}^{f}\bQ_{g} \bD_\phi {}^{g}\bphi_f),
\end{align}
with
\begin{align}				
\label{eq:tilde_bw_original}
\tilde{\bw}_f =& \bw_f +{}^{f}\bQ_{g} \bD_{\phi} ({}^{g}\bphi_f - {}^{g}\bphi_f^{des}), \\
\nonumber\tilde{\bv}_f =& \bv_f + {}^{b}\bR_g \bD_{r,p} ({}^{g}\br_f - {}^{g}\br_f^{des})\\
              &+ {}^{b}\bR_g \bD_{r,i} \int ({}^{g}\br_f - {}^{g}\br_f^{des}).
\label{eq:tilde_bv_original}
\end{align}	
Here, $\bD, \bD_{r,p}, \bD_{r,i}, \bD_{\phi}>0$ are diagonal gain matrices. Integral action is included in (\ref{eq:tilde_bv_original}) to hold UAV's desired position against interaction forces. Since the integrator is a non-passive element, it may result in energy generation. In this paper, PO/PC techniques will be exploited to ensure passivity, and therefore, non-passive actions (e.g., integral) will be corrected if needed.

%In this paper, since the PO/PC will be applied to ensure the passivity, it will automatically cancel out the integral action if needed.

The UAV position error perturbs the rotational dynamics via $\bM_{tr}^T\tilde{\bv}_f$ in (\ref{eq:bw_f_dot_ref}).\footnote{Note that $\bM_{tr}$ is given by $\bM_{tr}=-m{}^{f}\br_{CoM/f}^{\vee}$. Here, $m$ is the total mass, ${(\cdot)}^{\vee}$ is the skew-symmetric operator, and ${}^{f}\br_{CoM/f}$ is the position of the CoM of the overall UAV-M system from the origin of $\{f\}$.} Because the UAV $x$, $y$ positions are indirectly controlled by orientation of the UAV, gains for rotational dynamics ($\bD_{w}$, $\bD_{\phi}$) should be selected larger than those for translational dynamics ($\bD_{r,p}$, $\bD_{r,i}$). Otherwise, the perturbation will be too large, and may result in insufficient performance. For more details on the gain selection strategy, one may refer to Section IV-A in \cite{kim2018stabilizing}. %$f_{\text{th}}$ is the dedicated control input for body $z$-direction of the UAV. 

Note that, in the UAV controller (\ref{eq:control_f_th})-(\ref{eq:control_tau_uav}), manipulator variable does not appear explicitly,  except for the gravity compensation (i.e., controlled independently). Conceptually, $f_{\text{th}}$ leads to $\tilde{v}_{f,z}=0$ and $\btau_{\text{uav}}$ to $\dot{\bw}_f = \dot{\bw}_f^{ref}$. $\bv_{f,xy}$ is indirectly controlled by the reference acceleration tracking of $\dot{\bw}_{f}^{ref}$ which is designed to achieve $\tilde{\bw}_f=\bzero$ and $\tilde{\bv}_f=\bzero$ that imply stable error dynamics:
\begin{align}
{}^{g}\dot{\br}_f  + \bD_{r,p} ({}^{g}\br_f - {}^{g}\br_f^{des}) + \bD_{r,i} \int ({}^{g}\br_f - {}^{g}\br_f^{des})=& \bzero,\;\;\; \\ 
{}^{g}\dot{\bphi}_f  + \bD_\phi ({}^{g}\bphi_f - {}^{g}\bphi_f^{des}) = \bzero&,
\label{eq:meaning_tilde_zero}
\end{align}
because $\bw_f={}^{f}\bQ_{g} {}^{g}\dot{\bphi}_f$ and $\bv_f = {}^{f}\bR_g {}^{g}\dot{\br}_f$.

On the other hand, from the energy point of view, passivity of (\ref{eq:control_f_th})-(\ref{eq:control_tau_uav}) is not guaranteed. To overcome this, the following PO is applied to check if the passivity is violated:
\begin{align}
E_{obs,z} = \int -{}^{e}\bEta_{\text{th}}^T \bxi_{e} \mathrm{d}t = \int -f_{\text{th}} v_{f,z} \mathrm{d}t, \\
E_{obs,w} = \int -{}^{e}\bEta_{\text{uav}}^T \bxi_{e} \mathrm{d}t = \int -\btau_{\text{uav}}^T\bw_f \mathrm{d}t.
\end{align}
If the passivity condition is violated, the time varying damping terms $D^{pc}_z$ and $\bD_w^{pc}$ are applied to guarantee the passivity of UAV dynamics:
%the varying damping control $-D^{pc}_z \tilde{v}_{f,z}$ and $-\bD_w^{pc} \bv_f$ are activated to guarantee the passivity of UAV dynamics:
\begin{align}
D_z^{pc}&=\begin{cases}
-\frac{E_{obs,z} + E_{z}(0)}{dT \cdot v_{f,z}^2}, & \text{$E_{obs,z} < - E_{z}(0)$}\\
0, & \text{otherwise},
\end{cases} \\
\bD_w^{pc}&=\begin{cases}
-\frac{E_{obs,w}+E_{w}(0)}{dT \cdot \bw_{f}^T\bw_{f}}\bI, & \text{$E_{obs,w} < - E_{w}(0)$}\\
\bzero, & \text{otherwise},
\end{cases}
\end{align}
where $dT$ is the sampling time and  $E_{z}(0)$, $E_{w}(0)$ are initially stored energy.

\subsubsection{Manipulator control (compliance control)}

The manipulator control input to realize compliance behavior of the end-effector is given by
\begin{align}
\label{eq:control_tau_m}
\btau_m &= \bJ_{eq}^T \bT^T \left( \bK_p(\bx_e^{des} - \bx_e) - \bK_d \dot{\bx}_e  \right) + \bg_m,
\end{align}
where $\bK_p, \bK_d > \bzero$ are stiffness and damping gains respectively, and $\bT=diag\{{}^{f}\bR_{e}, {}^{e}\bQ_f^{-1}\}$. Similar to the UAV control, the UAV variables do not appear in the manipulator control except for the gravity compensation.

To show the intrinsic passivity of the compliance controller, we begin with the following lemma.
\begin{lem}
	\label{lem:passive_pre_post_mult}
	Consider an arbitrary system with passive I/O pair $(\bm{u}, \bm{y} )$. Pre-multiplication of the input $\bm{u}$ by a matrix $\bm{A}$ and post-multiplication of the output $\bm{y}$ by $\bm{A}^T$ preserve the passivity.  Namely,  the new I/O pair $(\tilde{\bm{u}}, \tilde{\bm{y}} )$ is passive, where $\bm{u}=\bm{A}\tilde{\bm{u}}$ and $\tilde{\bm{y}}=\bm{A}^T \by$. %If $\bm{A}^T \bm{A}>\bzero$, then this pre-/post-multiplication also preserves input strict passivity.
\end{lem}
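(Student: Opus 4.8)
The plan is to prove the lemma directly from the integral definition of passivity, exploiting the fact that the transformation leaves the instantaneous supply rate (power) unchanged. Recall that the pair $(\bm{u},\by)$ being passive means there exists a constant $E(0) \geq 0$ (the initially stored energy) such that $\int_0^T \bm{u}^T \by \, dt \geq -E(0)$ for every $T \geq 0$. The whole argument reduces to showing that the new pair $(\tilde{\bm{u}},\tilde{\by})$ obeys the identical inequality with the same bound, so that any storage function witnessing passivity of the original system witnesses it for the transformed one as well.

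The key step is a one-line algebraic identity on the supply rate. Using $\tilde{\by}=\bm{A}^T\by$ together with $\bm{u}=\bm{A}\tilde{\bm{u}}$, I would compute at each instant $t$ that $\tilde{\bm{u}}^T \tilde{\by} = \tilde{\bm{u}}^T \bm{A}^T \by = (\bm{A}\tilde{\bm{u}})^T \by = \bm{u}^T \by$. Hence the power delivered to the transformed port coincides pointwise with the power delivered to the original port. Integrating from $0$ to $T$ then gives $\int_0^T \tilde{\bm{u}}^T \tilde{\by}\, dt = \int_0^T \bm{u}^T \by \, dt \geq -E(0)$, which is precisely the passivity inequality for $(\tilde{\bm{u}},\tilde{\by})$, with the same $E(0)$.

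There is essentially no computational obstacle here; the content lies entirely in recognizing that pairing a pre-multiplication by $\bm{A}$ on the input with the transposed post-multiplication by $\bm{A}^T$ on the output is exactly what makes the bilinear power form $\bm{u}^T\by$ invariant — the same mechanism by which a congruence $\bm{A}^T(\cdot)\bm{A}$ preserves sign-definiteness. The one point I would flag explicitly is that the identity holds instantaneously, so it remains valid when $\bm{A}$ is configuration-dependent and hence time-varying, as in the intended application where $\bm{A}=\bJ_{eq}^T\bm{T}^T$; no invertibility, boundedness, or constancy of $\bm{A}$ is required. This instantaneous invariance is what I would subsequently invoke to argue that the compliance controller $\btau_m$ in (\ref{eq:control_tau_m}) inherits the intrinsic passivity of the underlying end-effector PD law.
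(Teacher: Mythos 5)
Your proof is correct and takes essentially the same route as the paper: both rest on the pointwise identity $\tilde{\bm{u}}^T\tilde{\bm{y}} = (\bm{A}\tilde{\bm{u}})^T\bm{y} = \bm{u}^T\bm{y}$ followed by integration. Your version is in fact slightly more careful than the paper's (which writes the passivity condition loosely as $\int \bm{u}^T\bm{y} > 0$ rather than $\geq -E(0)$), and your remark that the identity holds instantaneously for time-varying $\bm{A}$ is exactly the point needed for the application with $\bm{A} = \bJ_{eq}^T\bT^T$.
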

\begin{proof}
	By assumption, $\int \bu^T \by> 0$. The passivity of the new I/O pair is trivial because $\int \tilde{\bu}^T \tilde{\by} = \int \tilde{\bu}^T \bm{A}^T \by= \int (\bm{A}\tilde{\bu})^T\by= \int \bu^T\by > 0$.
	%Assumption that there exists a storage function that satisfies $\dot{V} \leq \bm{u}^T \bm{y} - \bm{u}^T \bD \bm{u}$ with $\bD>\bzero$. The input strict passivity of the new I/O pair is trivial because $\dot{V} \leq \bm{u}^T \bm{y} - \bm{u}^T \bD \bm{u} =   \tilde{\bm{u}}^T \bm{A}^T \bm{y} - \tilde{\bm{u}}^T \bm{A}^T \bD \bm{A} \tilde{\bm{u}} = \tilde{\bm{u}}^T \tilde{\bm{y}} - \tilde{\bm{u}}^T (\bm{A}^T \bD \bm{A}) \tilde{\bm{u}}$.	
\end{proof}

%The intrinsic passivity of the compliance controller can be shown by the following lemmas. Lemma \ref{lem:passive_pre_post_mult} states that proper manipulation of I/O pair preserves passivity, and Lemma \ref{lem:passivity} states the passivity of the compliance controller.

The following lemma shows that $\dot{\bx}_e$ can be obtained by a coordinate transformation of $\bxi_e$.

\begin{lem}
	\label{lem:dot_x_e}
	$\dot{\bx}_e$ can be expressed as%\footnote{	To avoid confusion, please note that $\bT {}^{e}\bV_{e/f}$ is different from the coordinate transformation ${}^{f}\bAd_{e}{}^{e}\bV_{e/f}$.}
	\begin{align}
%	\label{eq:xdot_e_1}
	\dot{\bx}_{e} %=&
%	\underbrace{\left[
%		\begin{array}{cc}
%		{}^{f}\bR_{e} & \bzero\\
%		\bzero & {}^{e}\bQ_{f}^{-1}
%		\end{array}
%		\right]}_{=\bT}
%	{}^{e}\bV_{e/f} \\
	\label{eq:xdot_e_2}
	=& \bT [-\bJ_{ef} \;\; \bI]\bxi_e.
	\end{align}
\end{lem}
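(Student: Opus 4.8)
The plan is to read the product $[-\bJ_{ef}\;\;\bI]\bxi_e = \bV_e - \bJ_{ef}\bV_f$ as the body twist of the end-effector frame $\{e\}$ taken relative to the fuselage frame $\{f\}$, and then to check that the block-diagonal matrix $\bT$ simply re-expresses that relative twist as the coordinate rate $\dot{\bx}_e$. The first move is to substitute the Jacobian relation $\bV_e = \bJ_{ef}\bV_f + \bJ_{eq}\dot{\bq}_m$ (the second block row of $\bxi_e = \bJ_e\bxi_q$), which collapses the right-hand side to $\bJ_{eq}\dot{\bq}_m$. Since $\bJ_{eq}$ is the body manipulator Jacobian, this quantity is exactly the relative body twist ${}^{e}\bV_{e/f} = ({}^{e}\bv_{e/f}^T,\;{}^{e}\bw_{e/f}^T)^T$, i.e. the motion of $\{e\}$ with respect to $\{f\}$ produced by the joints, expressed in $\{e\}$.

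To justify that identification cleanly I would invoke the twist-composition rule. Writing the homogeneous transforms as $g_{ge}=g_{gf}\,g_{fe}$ and differentiating $\hat{\bV}_e = g_{ge}^{-1}\dot{g}_{ge}$ gives
\begin{align}
\bV_e = \bAd_{g_{ef}}\,\bV_f + {}^{e}\bV_{e/f},
\end{align}
so comparing with the Jacobian relation forces $\bJ_{ef}=\bAd_{g_{ef}}$ and $\bJ_{eq}\dot{\bq}_m = {}^{e}\bV_{e/f}$; freezing the joints ($\dot{\bq}_m=\bzero$) makes $\{e\}$ rigid with $\{f\}$ and confirms the first equality. Either route yields $[-\bJ_{ef}\;\;\bI]\bxi_e = {}^{e}\bV_{e/f}$.

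It remains to apply $\bT=\mathrm{diag}\{{}^{f}\bR_e,\,{}^{e}\bQ_f^{-1}\}$, which acts block-wise. For the linear block, a short computation with $g_{fe}^{-1}\dot{g}_{fe}$ shows ${}^{e}\bv_{e/f}=({}^{f}\bR_e)^T\,{}^{f}\dot{\br}_{e/f}$, hence ${}^{f}\bR_e\,{}^{e}\bv_{e/f}={}^{f}\dot{\br}_{e/f}$, the top block of $\dot{\bx}_e$. For the angular block I would use the Euler-rate map listed in the notation, ${}^{e}\bw_{e/f} = {}^{e}\bQ_f\,{}^{f}\dot{\bphi}_e$, so that ${}^{e}\bQ_f^{-1}\,{}^{e}\bw_{e/f} = {}^{f}\dot{\bphi}_e$, the bottom block. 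Stacking the two blocks reproduces $\dot{\bx}_e = ({}^{f}\dot{\br}_{e/f}^T,\;{}^{f}\dot{\bphi}_e^T)^T$ and closes the proof.

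I expect the main obstacle to be the first step — establishing that subtracting $\bJ_{ef}\bV_f$ from $\bV_e$ isolates precisely the relative body twist ${}^{e}\bV_{e/f}$, i.e. recognizing $\bJ_{ef}$ as the adjoint $\bAd_{g_{ef}}$ (equivalently, that $\bJ_{eq}\dot{\bq}_m$ is the joint-induced relative twist in $\{e\}$). Once that geometric fact is in hand, the role of $\bT$ is pure bookkeeping, converting a body twist into the chosen coordinate rates. A minor subtlety worth flagging is the frame-omission convention: in the $\bQ$-map relation the symbol $\bw_e$ must be read as the relative angular velocity ${}^{e}\bw_{e/f}$ rather than the ground-relative ${}^{e}\bw_{e/g}$, which is what makes the rotational block consistent.
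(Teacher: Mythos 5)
Your proposal is correct and takes essentially the same route as the paper: you isolate the relative twist ${}^{e}\bV_{e/f}=[-\bJ_{ef}\;\;\bI]\bxi_e$ from the second block row of $\bxi_e=\bJ_e\bxi_q$ and then apply $\bT$ block-wise, which is exactly the paper's two-step argument (\ref{eq:proof_xdot_e_1})--(\ref{eq:proof_xdot_e_2}). Your adjoint computation $\bJ_{ef}=\bAd_{g_{ef}}$ merely fills in the identification the paper asserts via underbraces, and your reading of $\bw_e$ in the $\bQ$-map as the relative angular velocity ${}^{e}\bw_{e/f}$ is precisely the interpretation the paper's proof relies on.
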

\begin{proof}	
	Using
	\begin{align}
	[\bzero \;\; \bI]\bxi_e = \bV_e = &  \underbrace{ \bJ_{ef}  \bV_f }_{={}^{e}\bV_{f/g}} + \underbrace{\bJ_{eq}\dot{\bq}_m}_{={}^{e}\bV_{e/f}}\\
	=& [ \bJ_{ef} \;\; \bzero] \bxi_e + {}^{e}\bV_{e/f},
	\end{align}
	the following relation holds:
	\begin{align}
	\label{eq:proof_xdot_e_1}
	{}^{e}\bV_{e/f} = [ -\bJ_{ef} \;\; \bI]\bxi_e.
	\end{align}
	Furthermore, noting that
	\begin{align}
	\label{eq:proof_xdot_e_2}
	{}^{e}\bV_{e/f} = 
	\left(
	\begin{array}{c}
	{}^{e}\bR_{f}{}^{f}\dot{\br}_{e/f} \\
	{}^{e}\bw_{e/f}
	\end{array}
	\right)
	=
	\underbrace{\left[
		\begin{array}{cc}
		{}^{e}\bR_{f} & \bzero\\
		\bzero & {}^{e}\bQ_{f}
		\end{array}
		\right]}_{=\bT^{-1}}
	\dot{\bx}_e,
	\end{align}
	we arrive at (\ref{eq:xdot_e_2}) by combining (\ref{eq:proof_xdot_e_1})-(\ref{eq:proof_xdot_e_2}).
\end{proof}

Finally, the following theorem shows the intrinsic passivity of compliance controller (\ref{eq:control_tau_m}) using Lemma \ref{lem:passive_pre_post_mult} and \ref{lem:dot_x_e}.

\begin{thm}
	\label{thm:passivity_controller}
	The compliance controller (\ref{eq:control_tau_m}) satisfies passivity of the I/O pair $(-\bxi_e, {}^{e}\bEta_{m})$.
\end{thm}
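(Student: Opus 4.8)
The plan is to exhibit the port $(-\bxi_e, {}^{e}\bEta_m)$ as a pre-/post-multiplied copy of a manifestly passive spring--damper element written in the $\bx_e$ coordinates, and then invoke Lemma \ref{lem:passive_pre_post_mult}. Concretely, I define the compliance wrench $\bforce_c := \bK_p(\bx_e^{des} - \bx_e) - \bK_d \dot{\bx}_e$, so that the controller (\ref{eq:control_tau_m}) reads $\btau_m = \bJ_{eq}^T \bT^T \bforce_c + \bg_m$. I would first argue passivity of the ``raw'' element with I/O pair $(-\dot{\bx}_e, \bforce_c)$, and then transport it through the kinematic maps $\bT$ and $[-\bJ_{ef}\;\;\bI]$ down to the $\bxi_e$ port.

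For the raw element I would take the elastic potential $V := \tfrac{1}{2}(\bx_e^{des} - \bx_e)^T \bK_p (\bx_e^{des} - \bx_e) \geq 0$ as a storage function. Since $\bx_e^{des}$ is constant, $\dot V = -\dot{\bx}_e^T \bK_p (\bx_e^{des} - \bx_e)$, and therefore $(-\dot{\bx}_e)^T \bforce_c = \dot V + \dot{\bx}_e^T \bK_d \dot{\bx}_e$. Integrating and using $\bK_d > \bzero$ gives $\int_0^t (-\dot{\bx}_e)^T \bforce_c\, \mathrm{d}\tau = V(t) - V(0) + \int_0^t \dot{\bx}_e^T \bK_d \dot{\bx}_e\, \mathrm{d}\tau \geq -V(0)$, i.e. the supplied energy is bounded below by the negative of the initial stored energy, which is exactly passivity of the pair $(-\dot{\bx}_e, \bforce_c)$ (with an additional damping surplus).

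Next I would set $\bm{A} := \bT[-\bJ_{ef}\;\;\bI]$ and check that this single matrix realizes both multiplications required by Lemma \ref{lem:passive_pre_post_mult}. On the input side, Lemma \ref{lem:dot_x_e} (equation (\ref{eq:xdot_e_2})) gives $-\dot{\bx}_e = \bm{A}(-\bxi_e)$, so the raw input is the pre-multiplication of $-\bxi_e$ by $\bm{A}$. On the output side, $\bm{A}^T \bforce_c = [-\bJ_{ef}\;\;\bI]^T \bT^T \bforce_c$, which has precisely the block structure of (\ref{eq:e_eta_m}); substituting the PD part of $\btau_m$ into (\ref{eq:e_eta_m}) and cancelling $\bJ_{eq}^{-T}\bJ_{eq}^T = \bI$ shows that it equals ${}^{e}\bEta_m$. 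Hence the port $(-\bxi_e, {}^{e}\bEta_m)$ is the post-multiplied output of the passive raw element, and Lemma \ref{lem:passive_pre_post_mult} delivers the claim.

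I expect the one point needing care to be the gravity term $\bg_m$ in $\btau_m$, which is not part of the spring--damper structure and drops out of the $\bm{A}^T \bforce_c$ identity above. I would handle it as the usual conservative gravity compensation: it cancels against the plant term $\bJ_e^{-T}\bg$ in (\ref{eq:modeling_e_re}) (the ``independent gravity compensation'' already isolated in the control design), so it injects no net energy into the port and can be absorbed without affecting the passivity inequality. Two smaller points I would flag are that invertibility of $\bJ_{eq}$, hence of $\bT$ and the cancellation $\bJ_{eq}^{-T}\bJ_{eq}^T=\bI$, rests on Assumption \ref{ass:redundancy}, and that the sign bookkeeping in $-\dot{\bx}_e = \bm{A}(-\bxi_e)$ must be kept consistent with the orientation of the port in Lemma \ref{lem:passive_pre_post_mult}.
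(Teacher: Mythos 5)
Your proposal is correct and takes essentially the same route as the paper: passivity of the set-point spring--damper element in $\bx_e$ coordinates, transported through $\bm{A}=\bT[-\bJ_{ef}\;\;\bI]$ via Lemma \ref{lem:dot_x_e}, and closed with Lemma \ref{lem:passive_pre_post_mult} --- exactly what the paper compresses into its one-line proof and the block diagram of Fig. \ref{fig:passivity_compliance_control}. Your explicit storage-function computation, the verification that $\bm{A}^T\bforce_c={}^{e}\bEta_m$ through the cancellation $\bJ_{eq}^{-T}\bJ_{eq}^{T}=\bI$ (resting on Assumption \ref{ass:redundancy}), and the flagging of the gravity term $\bg_m$ (which the paper silently lumps into the plant-side gravity cancellation) are all details the paper leaves implicit, and you handle them correctly.
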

\begin{proof}
	We begin with the fact that set-point velocity PI control (which is equivalent to the set-point position PD control) is passive. Using Lemma \ref{lem:dot_x_e}, the block diagram from $-\bxi_e$ to ${}^{e}\bEta_{m}$ can be described by Fig. \ref{fig:passivity_compliance_control}. Hence the I/O pair $(-\bxi_e, {}^{e}\bEta_{m})$ is passive by Lemma \ref{lem:passive_pre_post_mult}.
\end{proof}

Please note that, to preserve passivity, $\bT[-\bJ_{ef} \; \bI]$ has to be post-multiplied to $\bxi_{e}$, and this leads to $\bx_{e}$ as a control variable.  The following section presents stability and passivity analysis of the controlled UAV-M system.

\subsection{Stability of the controlled UAV-M system}

The following theorem states stability during the free-flight.

\begin{thm}[Asymptotic stability for free flight]
	\label{thm:stability}
	Assume that ${}^{e}\bF_{\text{ext}}=0$. If the control gains $D_z$, $\bD_w$, $\bD_\phi$, $\bK_p$, and $\bK_d$ are chosen sufficiently large, then the closed-loop dynamics resulting from (\ref{eq:control_f_th})-(\ref{eq:control_tau_uav}), (\ref{eq:control_tau_m}) is asymptotically stable to ${}^{g}\br_f={}^{g}\br_f^{des}$, ${}^{g}\bphi_f={}^{g}\bphi_f^{des}=[0 \; 0\; \gamma_f^{des}]^T$, and  $\bx_e=\bx_e^{des}$ with zero derivatives.
\end{thm}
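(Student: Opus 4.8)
The plan is to exploit the hierarchical cascade structure that the decoupled control design induces in the closed loop. With ${}^{e}\bF_{\text{ext}}=0$ the external port in (\ref{eq:modeling_e_re}) vanishes, so the dynamics is driven only by the three control channels ${}^{e}\bEta_{\text{th}}$, ${}^{e}\bEta_{\text{uav}}$, ${}^{e}\bEta_{m}$. Since the PO/PC terms $D_z^{pc}v_{f,z}$ and $\bD_w^{pc}\bw_f$ are dissipative by construction (they only ever remove energy, opposing the respective velocities), they cannot destabilize the loop; I would therefore carry out the stability analysis on the nominal controller (gains $D_z$, $\bD_w$ only), the conclusion being inherited by the corrected one. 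The target decomposition is: a manipulator compliance subsystem regulating $\bx_e$, a fast rotational subsystem regulating ${}^{g}\bphi_f$, and a slow translational subsystem whose $x,y$ components are steered only indirectly through the orientation.

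First I would analyze the manipulator compliance subsystem. Invoking Lemma \ref{lem:dot_x_e} to write $\dot{\bx}_e=\bT[-\bJ_{ef}\ \bI]\bxi_e$ and the passivity established in Theorem \ref{thm:passivity_controller}, I would propose the Lyapunov candidate combining kinetic energy with the compliance potential, $V_m=\tfrac{1}{2}\bxi_e^T\bLambda\bxi_e+\tfrac{1}{2}(\bx_e^{des}-\bx_e)^T\bK_p(\bx_e^{des}-\bx_e)$. Differentiating along (\ref{eq:control_tau_m}) and using the standard skew-symmetry of $\dot{\bLambda}-2\bGamma$, the indefinite Coriolis/stiffness terms cancel and only the dissipation $-\dot{\bx}_e^T\bK_d\dot{\bx}_e$ survives, giving $\dot V_m\le 0$. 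LaSalle's invariance principle then forces $\dot{\bx}_e\to\bzero$ and $\bx_e\to\bx_e^{des}$.

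Second I would treat the UAV subsystem. The thrust law (\ref{eq:control_f_th}) drives the body-$z$ error to $\tilde{v}_{f,z}\to 0$, and the torque law (\ref{eq:control_tau_uav}) enforces $\dot{\bw}_f\to\dot{\bw}_f^{ref}$. Because the reference acceleration (\ref{eq:bw_f_dot_ref}) is constructed precisely so that $\tilde{\bw}_f=\bzero$ and $\tilde{\bv}_f=\bzero$ form the attractor, the filtered errors (\ref{eq:tilde_bw_original})--(\ref{eq:tilde_bv_original}) vanish; then by the stable first-order error dynamics (\ref{eq:meaning_tilde_zero}) the actual orientation converges to $[0\ 0\ \gamma_f^{des}]^T$ and, through the indirect coupling, the position to ${}^{g}\br_f^{des}$, both with zero derivatives.

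The hard part will be the bidirectional coupling together with the underactuation. The perturbation term $-\bM_{tr}^T\tilde{\bv}_f$ in (\ref{eq:bw_f_dot_ref}) injects the slow translational error into the rotational loop, and this is the \emph{only} channel through which the non-actuated body $x,y$ position can be regulated; meanwhile $\btau_m$ feeds back into the full UAV-M inertia. Rigorously closing the interconnection therefore demands a genuine time-scale separation, with the rotational error settling far faster than the translational error. This is exactly where the hypothesis of \emph{sufficiently large} $\bD_w$, $\bD_\phi$ (relative to $\bD_{r,p}$, $\bD_{r,i}$) enters. I would formalize it via singular perturbation theory, or equivalently a composite Lyapunov function with small parameter $\epsilon\sim\norm{\bD_w}^{-1}$, treating the rotational/manipulator dynamics as the boundary-layer (fast) system and the translational error as the reduced (slow) system, and invoke a Tikhonov-type argument to conclude asymptotic stability of the full loop, consistent with the gain-selection analysis of \cite{kim2018stabilizing}.
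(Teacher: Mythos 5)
Your overall route coincides with the paper's own proof: the appendix argument is precisely a two-time-scale (singular perturbation) analysis in which the gains are scaled as $D_z=1/\epsilon$, $\bD_w=\frac{1}{\epsilon}\bI$, $\bK_d=\frac{1}{\epsilon}\bI$, $\bK_p=\frac{1}{\epsilon^2}\bI$, a fast time $\sigma=t/\epsilon$ is introduced, the un-actuated velocities $v_{f,x},v_{f,y}$ are shown to be frozen in the fast scale, and the boundary-layer system (\ref{eq:boundary_1}) yields $\tilde{v}_{f,z}=0$, $\bw_f=\bw_f^{ref}$, $\bx_e=\bx_e^{des}$, after which the slow $x$-$y$ translational analysis is inherited from \cite{kim2018stabilizing}. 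Your closing Tikhonov paragraph is exactly this plan, and your observation that the PO/PC terms are nonnegative damping injections that can be set aside is consistent with the paper. One detail the paper's scaling makes explicit and you should too: the manipulator gains must also be pushed into the fast scale ($\bK_p\sim 1/\epsilon^2$, $\bK_d\sim 1/\epsilon$), consistent with the theorem hypothesis that \emph{all five} gains are large — the manipulator belongs to the boundary layer, not to a separately converging loop.

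That last point exposes the genuine flaw in your intermediate step: the standalone LaSalle argument for the "manipulator compliance subsystem" fails as written. Your candidate $V_m=\frac{1}{2}\bxi_e^T\bLambda\bxi_e+\frac{1}{2}(\bx_e^{des}-\bx_e)^T\bK_p(\bx_e^{des}-\bx_e)$ contains the kinetic energy of the \emph{entire} UAV-M, since the $12\times 12$ inertia $\bLambda$ includes the fuselage twist $\bV_f$. Differentiating along (\ref{eq:modeling_e_re}) and using skew-symmetry of $\dot{\bLambda}-2\bGamma$ leaves not only $-\dot{\bx}_e^T\bK_d\dot{\bx}_e$ but also the power injected through the UAV channels, $f_{\text{th}}v_{f,z}+\btau_{\text{uav}}^T\bw_f$, plus the uncompensated body $x$-, $y$-components of gravity (which cannot be cancelled for lack of actuation and are nonzero during transients). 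These terms are sign-indefinite: $\bw_f^{ref}$ contains the feedforward $\frac{d}{dt}({}^{f}\bQ_{g}\bD_\phi{}^{g}\bphi_f)$ and, through $\tilde{\bv}_f$, the integral action — precisely the non-passive elements for which the paper introduces PO/PC. Hence $\dot{V}_m\le 0$ does not follow; Theorem \ref{thm:passivity_controller} certifies passivity of the port $(-\bxi_e,{}^{e}\bEta_m)$ only, not closed-loop convergence of $\bx_e$ in isolation, and LaSalle is inapplicable anyway because the subsystem is rendered non-autonomous by the coupling. The gap is reparable because the step is redundant: delete it and let the boundary-layer analysis deliver $\bx_e\to\bx_e^{des}$, as in (\ref{eq:boundary_1}). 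The same caveat applies to your claim that (\ref{eq:control_tau_uav}) "enforces $\dot{\bw}_f\to\dot{\bw}_f^{ref}$": that law is pure damping, not feedback linearization (Remark \ref{rem:difference}), so tracking of $\bw_f^{ref}$ only emerges in the high-gain limit that your singular perturbation framework — like the paper's — supplies.
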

\begin{proof}
	See Appendix.
\end{proof}

\begin{rem}
	\label{rem:difference}
	In \cite{kim2018stabilizing}, stability is analyzed based on (i) perfect $\bq_m$ regulation and (ii) perfect $\dot{\bw}_f^{ref}$ tracking. To achieved these, feedback linearization was used in \cite{kim2018stabilizing}. However, feedback linearizing action includes coupling between UAV and manipulator, and consequently, the resulting control law becomes highly model dependent. In principle, model-based controllers will probably outperform the model-free ones, but at the cost of increased implementation complexity. In contrast, (\ref{eq:control_f_th})-(\ref{eq:control_tau_uav}) and (\ref{eq:control_tau_m}) do not include feedback linearizing action. In the stability proof, this paper uses two-time scale analysis. This is a reasonable choice because the un-actuated $x$-,$y$-translational dynamics is slower than the actuated dynamics because the former is indirectly controlled by the rotational dynamics of UAV. \QEDB
\end{rem}

\begin{figure}[]
	\centering
	{\includegraphics[scale=0.39]{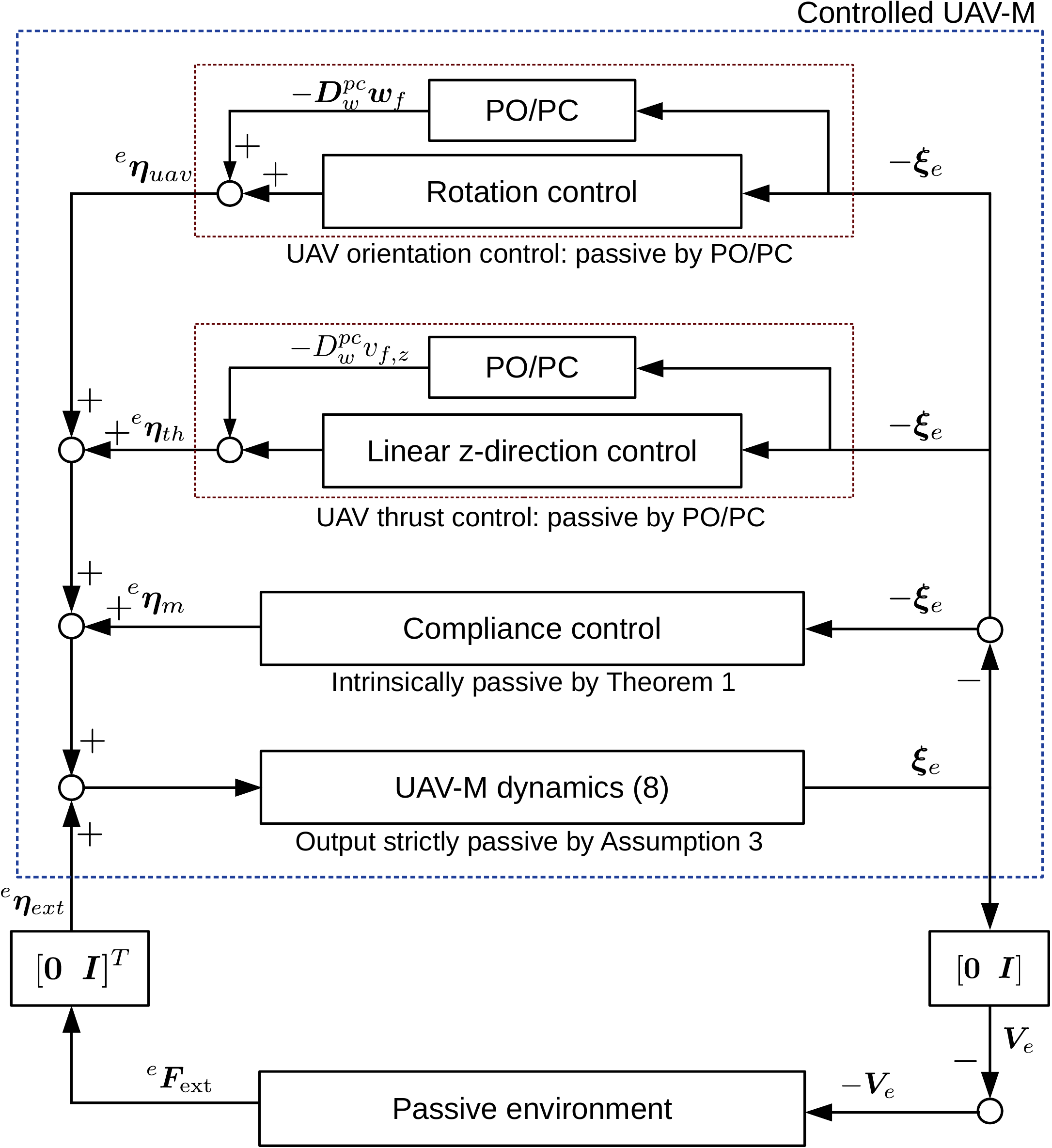}} \\
	\caption{The resulting control structure can be interpreted as feedback interconnections of output strictly and strictly passive subsystems. As a result, the overall closed-loop dynamics (including passive environment) is asymptotically stable.}
	\label{fig:overall_control_str}
\end{figure}

Summarizing the discussion in Section \ref{sec:control_design}, the closed-loop dynamics can be described by Fig. \ref{fig:overall_control_str}. Using skew-symmetric property of $\dot{\bLambda} - 2\bGamma$,  the controlled UAV-M (blue dashed box in Fig. \ref{fig:overall_control_str}) can be represented as feedback interconnections of passive subsystems. The following theorem states the stable environmental interaction.

%Therefore, an open I/O port $({}^{e}\bEta_{\text{ext}}, \bxi_e)$ of the controlled UAV-M is passive. However, although this I/O port is passive, stable interaction with passive environment cannot be concluded so far because Assumption \ref{ass:passive_env} addresses passivity of the environment using body twist/wrench. The following theorem states the stable interaction  by applying proper coordinate transformation.

%in the global frame. The following theorem states the stable interaction with the passive environment by applying proper coordinate transformation to the I/O port $({}^{e}\bEta_{\text{ext}}, \bxi_e)$.

\begin{thm}[Stable environmental interaction]
	\label{thm:passivity}
	Assume that the environment can be modeled as a mass-spring-damper system so that Assumption \ref{ass:passive_env} is valid. The overall closed-loop dynamics including the environment is asymptotically stable to a certain equilibrium point.
	%An I/O pair of the controlled UAV-M  $({}^{e}\bF_{\text{ext}},\bV_{e})$ is output strictly passive. Moreover, by modeling the passive environment as a mass-spring-damper system so that Assumption \ref{ass:passive_env} is valid, the overall closed-loop dynamics including the environment is asymptotically stable to a certain equilibrium point.
\end{thm}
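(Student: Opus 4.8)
The plan is to assemble a single storage (Lyapunov) function for the whole interconnection of Fig.~\ref{fig:overall_control_str} from the storage functions of its individual passive blocks, show that its derivative is negative semidefinite thanks to the drag and the environment damping, and then invoke LaSalle's invariance principle to localise the limit set at a balancing equilibrium. In effect this is the passivity theorem for the feedback connection of an output strictly passive plant with a strictly passive environment, made explicit.

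First I would collect the passivity certificates already available at the common port velocity $\bxi_e$. By Theorem~\ref{thm:passivity_controller} the compliance controller is passive for $(-\bxi_e,{}^{e}\bEta_{m})$, while the PO/PC construction renders the two UAV blocks passive for $(-\bxi_e,{}^{e}\bEta_{\text{th}})$ and $(-\bxi_e,{}^{e}\bEta_{\text{uav}})$; let $S_m,S_{\text{th}},S_{\text{uav}}$ be their storage functions, with $S_m$ containing the compliance potential $\tfrac12(\bx_e^{des}-\bx_e)^T\bK_p(\bx_e^{des}-\bx_e)$. For the mechanical plant I would take the kinetic energy $\tfrac12\bxi_e^T\bLambda\bxi_e$ and differentiate it along (\ref{eq:modeling_e_re}); the skew-symmetry of $\dot{\bLambda}-2\bGamma$ cancels the inertia/Coriolis contribution, the gravity term $\bJ_e^{-T}\bg$ is removed by the gravity-compensation parts of ${}^{e}\bEta_{\text{th}},{}^{e}\bEta_{\text{uav}},{}^{e}\bEta_{m}$, and by Assumption~\ref{ass:strictly_passive_UAV_M} the drag supplies an output-strict term, so that $\tfrac{d}{dt}(\tfrac12\bxi_e^T\bLambda\bxi_e)\le \bxi_e^T({}^{e}\bEta_{\text{th}}+{}^{e}\bEta_{\text{uav}}+{}^{e}\bEta_{m}+{}^{e}\bEta_{\text{ext}})-\epsilon\norm{\bxi_e}^2$.

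Next I would sum all the storages, $V=\tfrac12\bxi_e^T\bLambda\bxi_e+S_m+S_{\text{th}}+S_{\text{uav}}+S_{\text{env}}$, where $S_{\text{env}}$ is the environment storage of Assumption~\ref{ass:passive_env}. Adding the individual dissipation inequalities makes every internal supply rate at the shared port telescope away, leaving only the external port $\bxi_e^T{}^{e}\bEta_{\text{ext}}=\bV_e^T{}^{e}\bF_{\text{ext}}$, which the environment closes. Strict passivity of $(-\bV_e,{}^{e}\bF_{\text{ext}})$ cancels this cross term and adds $-\delta\norm{\bV_e}^2$, giving $\dot V\le-\epsilon\norm{\bxi_e}^2-\delta\norm{\bV_e}^2\le 0$. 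Since $V$ is bounded below (kinetic energy plus the spring potentials plus the environment storage), the interconnection is stable and all signals are bounded.

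Finally, to upgrade boundedness to asymptotic stability I would apply LaSalle. The dissipation forces $\bxi_e\equiv\bzero$ on $\{\dot V=0\}$, hence $\bV_f=\bV_e=\bzero$ and $\dot\bxi_e=\bzero$; the largest invariant set in this locus must therefore satisfy the static force balance of (\ref{eq:modeling_e_re}), in which $\bK_p(\bx_e^{des}-\bx_e)$ balances the steady environment reaction and the wound-up UAV integral state tilts the fuselage to cancel the residual body $x,y$ wrench --- precisely the balancing configuration of Fig.~\ref{fig:problem_state}b. \textbf{The main obstacle} I anticipate is twofold. First, the integral action in (\ref{eq:tilde_bv_original}) is not passive on its own, so I must verify that after the PO/PC correction it admits a bona fide storage $S_{\text{uav}}$ that is bounded below; otherwise $V$ is not a legitimate Lyapunov function. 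Second, because the $x,y$ translation is unactuated I cannot read zero-state detectability off the actuated outputs directly: to show that the invariant set collapses to a single isolated equilibrium I would reuse the coupling $\bM_{tr}^T\tilde{\bv}_f$ of (\ref{eq:bw_f_dot_ref}) and the two-time-scale separation already exploited in Theorem~\ref{thm:stability}, arguing that the slow $x,y$ coordinates are driven to the balancing point through the faster rotational dynamics.
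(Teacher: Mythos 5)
Your argument is correct in substance, but it re-proves from scratch what the paper dispatches by citation: the paper's entire proof is a modular invocation of the passivity theorem (Theorem 6.3 in \cite{khalil2002nonlinear}) after three certifications --- (a) the controlled UAV-M is output strictly passive at the port $({}^{e}\bEta_{\text{ext}},\bxi_e)$ by Assumption~\ref{ass:strictly_passive_UAV_M}; (b) this twelve-dimensional port is restricted to the six-dimensional interaction port $({}^{e}\bF_{\text{ext}},\bV_e)$ by a small extension of Lemma~\ref{lem:passive_pre_post_mult} with $\bm{A}=[\bzero\;\bI]^T$, preserving output strict passivity (your direct observation $\bxi_e^T\,{}^{e}\bEta_{\text{ext}}=\bV_e^T\,{}^{e}\bF_{\text{ext}}$ encodes the same fact); and (c) zero-state observability of the controlled UAV-M is read off from Theorem~\ref{thm:stability}. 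Your explicit storage summation plus LaSalle is essentially the proof of that cited theorem instantiated on Fig.~\ref{fig:overall_control_str}, so what your route buys is transparency about where the dissipation terms $-\epsilon\norm{\bxi_e}^2$ and $-\delta\norm{\bV_e}^2$ originate; what the paper's route buys is that your second anticipated obstacle evaporates: there is no need to re-run the two-time-scale analysis inside the invariance argument, because free-flight asymptotic stability is used wholesale as the zero-state observability certificate --- with ${}^{e}\bF_{\text{ext}}\equiv\bzero$ and output $\bV_e\equiv\bzero$, the only trajectory compatible with the closed loop is the equilibrium, which together with output strict passivity collapses the limit set without any separate detectability analysis of the unactuated $x$-, $y$-directions. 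Your first anticipated obstacle (a bona fide, bounded-below storage for the PO/PC-corrected integral action) is real but is equally unaddressed by the paper: PO/PC enforces by construction only the integral inequality $E_{obs}+E(0)\ge 0$, i.e., passivity relative to the initially stored energy, and strictly speaking neither your LaSalle step nor Khalil's theorem covers the resulting switched, time-varying damping; both proofs share this looseness, so it is not a defect specific to your approach, though flagging it explicitly is to your credit.
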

\begin{proof}
	The I/O pair of the controlled UAV-M $({}^{e}\bEta_{\text{ext}}, \bxi_e)$ is output strictly passive due to Assumption \ref{ass:strictly_passive_UAV_M}. By small extension of Lemma \ref{lem:passive_pre_post_mult}, it can be easily shown that pre-/post-multiplication of the I/O by $\bm{A} = [\bzero \; \bI]^T$ preserves the output strict passivity. Therefore, the I/O pair $({}^{e}\bF_{\text{ext}}, \bV_{e})$ is output strictly passive. Noting that Theorem \ref{thm:stability} implies zero-state observability of the controlled UAV-M, and that the I/O pair $(-\bV_{e}, {}^{e}\bF_{\text{ext}})$ of the environment is strictly passive, asymptotic stability of the entire closed-loop shown in Fig. \ref{fig:overall_control_str} can be concluded by applying Theorem 6.3 in \cite{khalil2002nonlinear}.
%	
%	Noting that tI/O port $({}^{e}\bF_{\text{ext}},\bV_e)$ is output strictly passive with zero-state observability\footnote{	Theorem \ref{thm:stability} implies zero-state observability.}, and that the environment is strictly passive, asymptotic stability of the entire closed-loop in Fig. \ref{fig:overall_control_str} can be concluded by applying Theorem 6.3 in \cite{khalil2002nonlinear}.
%	Let us begin with the fact that the I/O pair $({}^{e}\bEta_{\text{ext}}, \bxi_e)$ of the controlled UAV-M is passive.  Consider pre- and post-multiplication of matrix $[\bzero\; \bI]^T$ to this I/O pair. Then, the passivity of $({}^{e}\bF_{\text{ext}}, \bV_{e})$ is straightforward from Lemma \ref{lem:passive_pre_post_mult}. Since the I/O pair $(-\bV_{e}, {}^{e}\bF_{\text{ext}})$ of passive environment is passive by Assumption \ref{ass:passive_env}, its feedback interconnection with controlled UAV-M preserves passivity, as shown in Fig. \ref{fig:overall_control_str}. Noting that I/O port $({}^{e}\bF_{\text{ext}},\bV_e)$ of the controlled UAV-M is output strictly passive with zero-state observability\footnote{
%		Output strict passivity of UAV-M originates from input strictly passivity of compliance control stated 1in Theorem \ref{thm:passivity_controller} with additional pre-/post-multiplication of $[\bzero \; \bI]^T$, and Theorem \ref{thm:stability} implies zero-state observability.
%	}, and that the environment is strictly passive, asymptotic stability of the entire closed-loop in Fig. \ref{fig:overall_control_str} can be concluded by applying Theorem 6.3 in \cite{khalil2002nonlinear}.
\end{proof}

This theorem requires to model the environment using spring and damper elements. Because this argument must hold even with negligibly small damping value, the environment will be modeled as a pure spring in the simulation validations as an extreme case.

%
%
%\begin{figure}
%	\centering
%	\subfigure[]
%	{\includegraphics[scale=0.33]{figures/pdf/}} \\
%	\centering
%	\subfigure[]
%	{\includegraphics[scale=0.33]{figures/pdf/}}
%	\caption{Pre-multiplication of the input by a matrix $\bm{A}$ and post-multiplication of the output by a matrix $\bm{A}^T$ preserves the passivity.}
%	\label{fig:passivity_pre_post_multiply}
%\end{figure}
%

\begin{figure*}[]
	\centering
	{\includegraphics[scale=0.62]{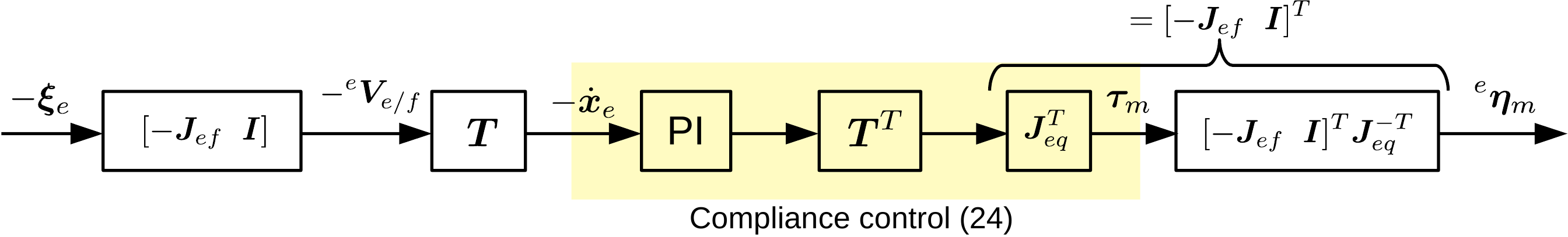}} \\
	\caption{Passivity preserving coordinate transformation of the compliance controller.}
	\label{fig:passivity_compliance_control}
\end{figure*}

%\begin{figure}[]
%	\centering
%	{\includegraphics[scale=0.32]{figures/pdf/}} \\
%	\caption{Passivity preserving coordinate transformation for I/O pair of the controlled UAV-M which is denoted by blue dashed box in Fig. \ref{fig:overall_control_str}. Consequently, new I/O pair $({}^{e}\bF_{\text{ext}},{}^{g}\bV_{e/g})$ can be interconnected with the passive environment by feedback.}
%	\label{fig:passivity_external_force}
%\end{figure}
%
%

\section{Simulation Validation}
\label{sec:validation}

To validate the proposed approach, the UAV-M in Fig. \ref{fig:heli_manipulator} was simulated using rigid body dynamics (\ref{eq:modified_dyn}). The mass and inertia of UAV were $37.6$  $\mathrm{kg}$ and  $diag\{1.46 \; 0.36\; 1.46\}$ $\mathrm{kg\cdot m^2}$. Due to Assumption \ref{ass:redundancy}, the 7th joint of the DLR light weight robot (LWR) manipulator was fixed. For every simulation, initial manipulator position was $\bx_{e}(0)=[0.3 \;\; 0 \;\; -0.75 \;\; 0 \;\; 0 \;\; 0]^T$ (recall that $\bx_e$ is the position/orientation of the end-effector from the UAV fuselage). For compliance controller, the stiffness and damping gains of translation were $diag\{100, 100, 100\}\mathrm{N/m}$, $diag\{10, 10, 10\}\mathrm{N\cdot s/m}$, and those of rotation were $diag\{100, 100, 100\}\mathrm{Nm/rad}$, $diag\{10, 10, 10\}\mathrm{Nm\cdot s/rad}$, respectively.

\subsection{Asymptotic stability during free flight}

To validate stability during free flight, the following task was performed.
\begin{itemize}
	\item The desired UAV position: ${}^{g}\br_{f}^{des}=\bzero \rightarrow [10 \; 10\; 10]^T\mathrm{m}$.
	\item The desired UAV orientation: ${}^{g}\bphi_{f}^{des}=\bzero\mathrm{rad}$.
	\item The desired end-effector position: At $t=15$ $\mathrm{s}$, $\bx_{e}^{des}=[0.3 \; 0  \; - 0.75 \; 0 \; 0 \; 0]^T \rightarrow [-0.5 \; 0 \; -0.75 \; 0 \; 0 \; 0 ]^T$.
\end{itemize}
Here, $\rightarrow$ means the step command.  

The results are shown in Fig. \ref{fig:simul_res_stability}. As the UAV position error occurred at the beginning (because of the step command), the reference acceleration $\dot{\bw}_f^{ref}$ defined in  (\ref{eq:bw_f_dot_ref}) was excited. Consequently, UAV roll and pitch angles were perturbed to reduce the UAV position error, and asymptotic stability could be achieved. Notice that the manipulator was also perturbed due to the UAV motion, because of the dynamic coupling between UAV and manipulator. Recall that the controller (\ref{eq:control_f_th})-(\ref{eq:control_tau_uav}) is (almost) model-free and does not cancel out the dynamic coupling. Conversely, at $t=15\mathrm{s}$, UAV dynamics was perturbed due to the manipulator's motion (see the magnified view in the first two rows), and was stabilized in a short instant.

\subsection{Stable interaction with passive environment}

To validate stable environmental interaction, the following two tasks were performed.

\subsubsection{Task 1}

In this task, physical wall located at $0.4$ $\mathrm{m}$ in global $x$-direction was simulated using $1000$ $\mathrm{N/m}$ spring, as shown in Fig. \ref{fig:simul_env}a. Desired positions were set as follows.
\begin{itemize}
	\item The UAV position: ${}^{g}\br_{f}^{des}=\bzero\mathrm{m}$ (hovering).
	\item The UAV orientation: ${}^{g}\bphi_{f}^{des}=\bzero\mathrm{rad}$.
	\item The end-effector position: $\bx_{e}^{des}=[0.3 \;\; 0$$ \;\; - 0.75 \;\; 0 \;\; 0 \;\; 0]^T \rightarrow [0.6 \;\; 0  \;\; - 0.75 \;\; 0 \;\; 0 \;\; 0]^T$.
\end{itemize}

The simulation results are shown in Fig. \ref{fig:simul_res_wall}. After contact occurred (around $t=0.5$ $\mathrm{s}$), the external force pushed the UAV away, and consequently, external force became zero because the contact was lost. Due to the interaction, the UAV position was disturbed by about $0.2\mathrm{m}$. The contact occurred again as the UAV recovered its desired position; see the third and fourth rows of Fig. \ref{fig:simul_res_wall}. During the contact, UAV-M converged to a certain equilibrium point at which it can balance the external force; $\alpha_f$ converged to a certain value  (recall Fig. \ref{fig:problem_state}b). Notice that the UAV converged to this equilibrium point automatically by virtue of passivity, without calculating the orientation that balances the interaction force. Note also that oscillation occurred on the end-effector (and hence on the interaction force as well) because the environment was modeled as a pure spring, but the was dissipated eventually. In this simulation, PC for UAV dynamics was not activated because the $E_{obs,w}$ and $E_{obs,z}$ were always positive.

\subsubsection{Task 2}

In this task, we consider the forces acting on the manipulator along every direction. Therefore, the environment is modeled as springs in every translational direction with stiffness of $1000$ $\mathrm{N/m}$, as shown in Fig. \ref{fig:simul_env}b. Note that this task considers only environmental interaction, whereas the previous task includes both free-flight phase and interaction phase. Desired positions were set as follows.

%This represent the typical scenario of manipulating a stiff object.

\begin{itemize}
	\item The UAV position: ${}^{g}\br_{f}^{des}=\bzero\mathrm{m}$ (hovering).
	\item The UAV orientation: ${}^{g}\bphi_{f}^{des}=\bzero\mathrm{rad}$.
	\item The end-effector position: $\bx_{e}^{des}=[0.3 \; 0$$\; - 0.75 \; 0 \;\; 0 \;\; 0]^T \rightarrow [0.6 \;\; 0.3  \;\; - 1.0 \; 0 \;\; 0 \;\; 0]^T$.
\end{itemize}

The simulation results are shown in Fig. \ref{fig:simul_res_constrained}. Similar to task 1, the UAV orientation converged to a certain equilibrium point that balances the interaction force, while the UAV position converged to the desired position. However, in this task, mainly because of the integral action, the passivity condition was broken in body linear $z$-direction, and the PC was activated. By virtue of the PC, the energy was maintained to be positive (fourth row of Fig. \ref{fig:simul_res_constrained}). In addition, the energy of I/O port $(-\bxi_e, {}^{e}\bEta_m)$ was observed to validate the passivity of the compliance controller. Passivity of this I/O port was maintained during the interaction because the energy was always less than the initially stored energy (see the fifth row of Fig. \ref{fig:simul_res_constrained}; 12.125$\mathrm{J}$ is the initially stored energy\footnote{
	Initial spring displacement is $[0.3 \; 0.3\; -0.25]^T\mathrm{m}$ and the stiffness was $diag\{100,100,100\}\mathrm{N/m}$.}). Therefore, we can conclude that the controlled UAV-M was passive because every sub-block in Fig. \ref{fig:overall_control_str} was passive. As a result, stable interaction with the environment could be achieved.

\begin{figure}
	\centering
	{\includegraphics[scale=0.54]{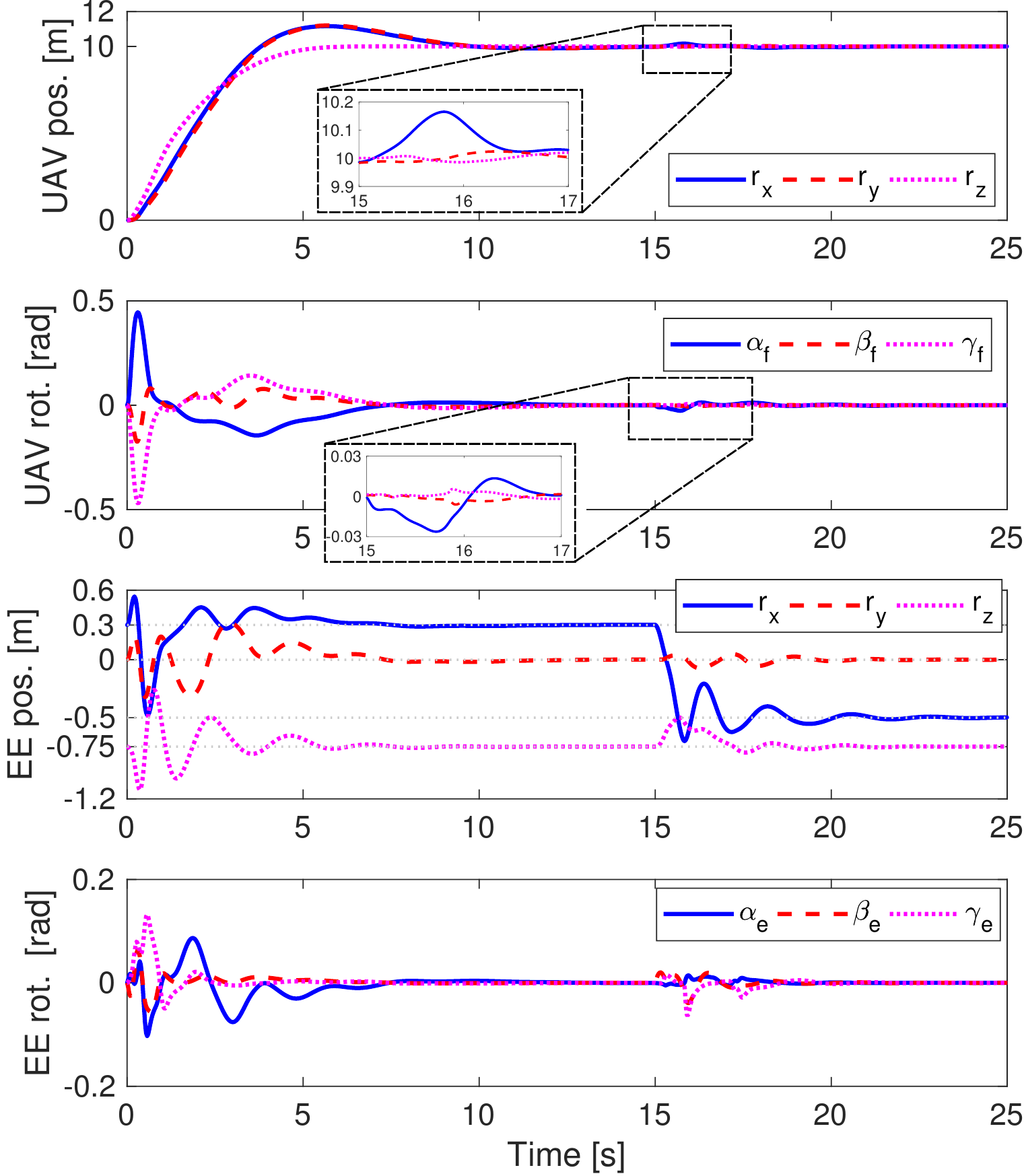}}
	\caption{Simulation validation for asymptotic stability during free flight. First and second rows: UAV position and orientation. Third and fourth rows: end-effector position and orientation ($\bx_e$).}
	\label{fig:simul_res_stability}
\end{figure}

\section{Conclusion}
\label{sec:conclusion}

This paper presents a passive compliance control for UAV-M to ensure stable interaction with passive environments. The key finding of this study is that the compliance controller satisfies passivity if the position of the end-effector is represented in the UAV fuselage frame. In addition to the passive manipulator controller, PO/PC technique is applied to ensure passivity of UAV controller. As a result, the controlled UAV-M can interact with the passive environment stably. Simulation studies validate stability of free flight and stable environmental interaction.

\section*{Appendix}
Since the control laws (\ref{eq:control_f_th})-(\ref{eq:control_tau_uav}) and (\ref{eq:control_tau_m}) are the extension of our previous work \cite{kim2018stabilizing}, this section presents only the sketch of the proof. To begin with, let us express a new coordinates $\bbxi$ which has $\dot{\bx}_e$ instead of $\bV_e$.
\begin{align}
\bar{\bxi} = 
\underbrace{
	\left[
	\begin{array}{cc}
	\bI & \bzero \\ 
	\bzero & 	\bT \bJ_{eq}
	\end{array}
	\right]
}_{=\bar{\bJ}}
\bxi_q.
\end{align}
Recall $\bT$ introduced in (\ref{eq:proof_xdot_e_2}). Using this coordinates, the UAV-M dynamics is
\begin{align}
\label{eq:uav_m_dynamics_for_stability}
\bbM \dot{\bbxi} + \bbC \bbxi + \bar{\bJ}^{-T} \bg = \bar{\bJ}^{-T} \btau_b,
\end{align}
with properly defined $\bbM$ and $\bbC$.

In the following, we apply two-time scale (also known as singular perturbation) analysis in which the linear $x$-, $y$-directional dynamics becomes slow dynamics and rotational dynamics become fast dynamics. This analysis is reasonable because the linear $x$-, $y$-directional motions are the consequences of orientation of the UAV.

For analysis purpose, let $D_z=1/\epsilon$, $\bD_w=\frac{1}{\epsilon}\bI$, $\bK_d=\frac{1}{\epsilon}\bI$, and $\bK_p=\frac{1}{\epsilon^2}\bI$. Also, let us define the fast time scale $\sigma$ by
\begin{align}
\sigma = \frac{1}{\epsilon}t,
\end{align}
and the derivative of $(\cdot)$ with respect to $\sigma$ is defined as
\begin{align}
(\cdot)' = \frac{d}{d \sigma}(\cdot)  = \frac{d}{dt/\epsilon}(\cdot) = \epsilon \frac{d}{dt}(\cdot).
\end{align}

Note that $v_{f,x}$ and $v_{f,y}$ are the frozen variables in the fast time scale, because first two rows of (\ref{eq:uav_m_dynamics_for_stability}) which represent linear $x$- and $y$-directional dynamics can be expressed as
\begin{align}
\frac{d}{d \sigma}
\left(
\begin{array}{c}
v_{f,x} \\
v_{f,y}
\end{array}
\right)
=0.
\end{align}
Then, as $\epsilon \rightarrow 0$, (\ref{eq:uav_m_dynamics_for_stability}) can be written as
\begin{align}
\label{eq:boundary_1}
\bbM_r 
\left(
\begin{array}{c}
v'_{f,z}\\
\bw'_f \\
\bx''_e
\end{array}
\right)
+
\left(
\begin{array}{c}
\tilde{v}_{f,z}\\
\bw_f - \bw_f^{ref} \\
\bx'_e + (\bx_e-\bx_e^{des})
\end{array}
\right)
=\bzero.
\end{align}
Here, $\bbM_r>\bzero$ denotes the last $(4+n)\times(4+n)$ block matrix (the subscript `$r$' stands for reduced). Therefore, it is trivial that the states converge to $\tilde{v}_{f,z}=0$, $\bw_{f}=\bw_{f}^{ref}$, and $\bx_e=\bx_e^{des}$ in the fast time scale $\sigma$.

\begin{figure}
	\centering
	\subfigure[]
	{\includegraphics[scale=0.49]{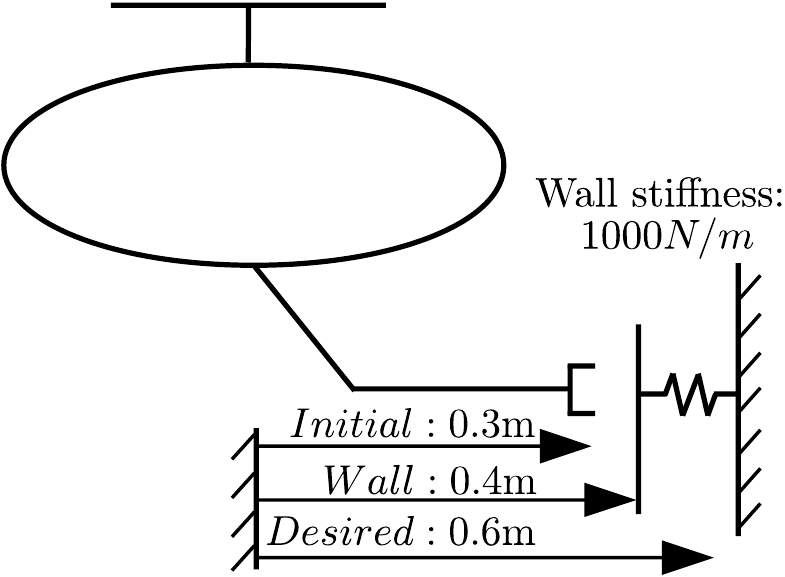}}
	\centering
	\subfigure[]
	{\includegraphics[scale=0.49]{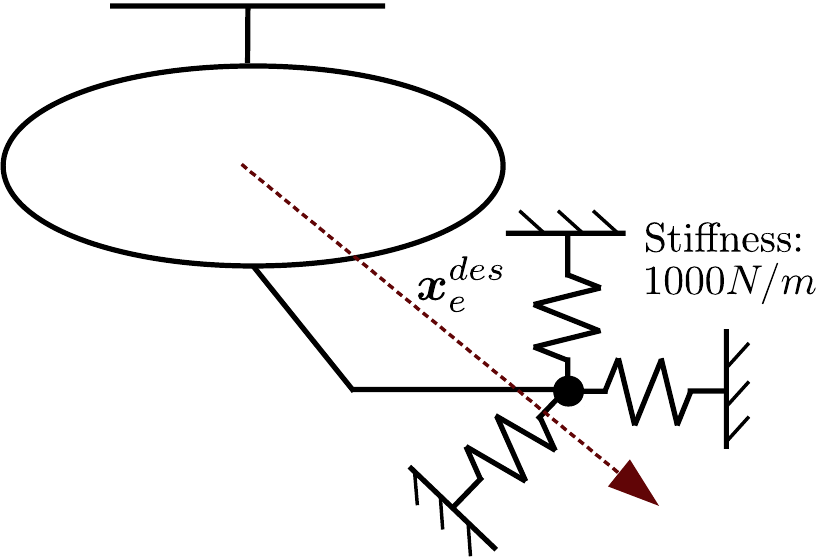}}
	\caption{Two tasks for validation of stable interaction. (a) Task 1: Physical wall is simulated with wall stiffness 1000 $\mathrm{N/m}$. The end-effector was commanded to penetrate the wall. (b) Task 2: End-effector is constrained by springs in linear $x$-,$y$-,$z$-directions, and commanded to move $[0.3 \; 0.3 \; 0.25]\mathrm{m}$ forward.}
	\label{fig:simul_env}
\end{figure}

Rest of the analysis can be performed by following the same procedure introduced in \cite{kim2018stabilizing}. As stated in Remark \ref{rem:difference}, the only difference is that  \cite{kim2018stabilizing} achieved $\bw_{f}=\bw_{f}^{ref}$ and $\bx_e=\bx_e^{des}$ by feedback linearization, but this paper achieved them by two-time scale analysis.

\begin{figure}
	\centering
	%	{\includegraphics[scale=0.55]{figures/eps/simul_res_wall.eps}}
	{\includegraphics[scale=0.54]{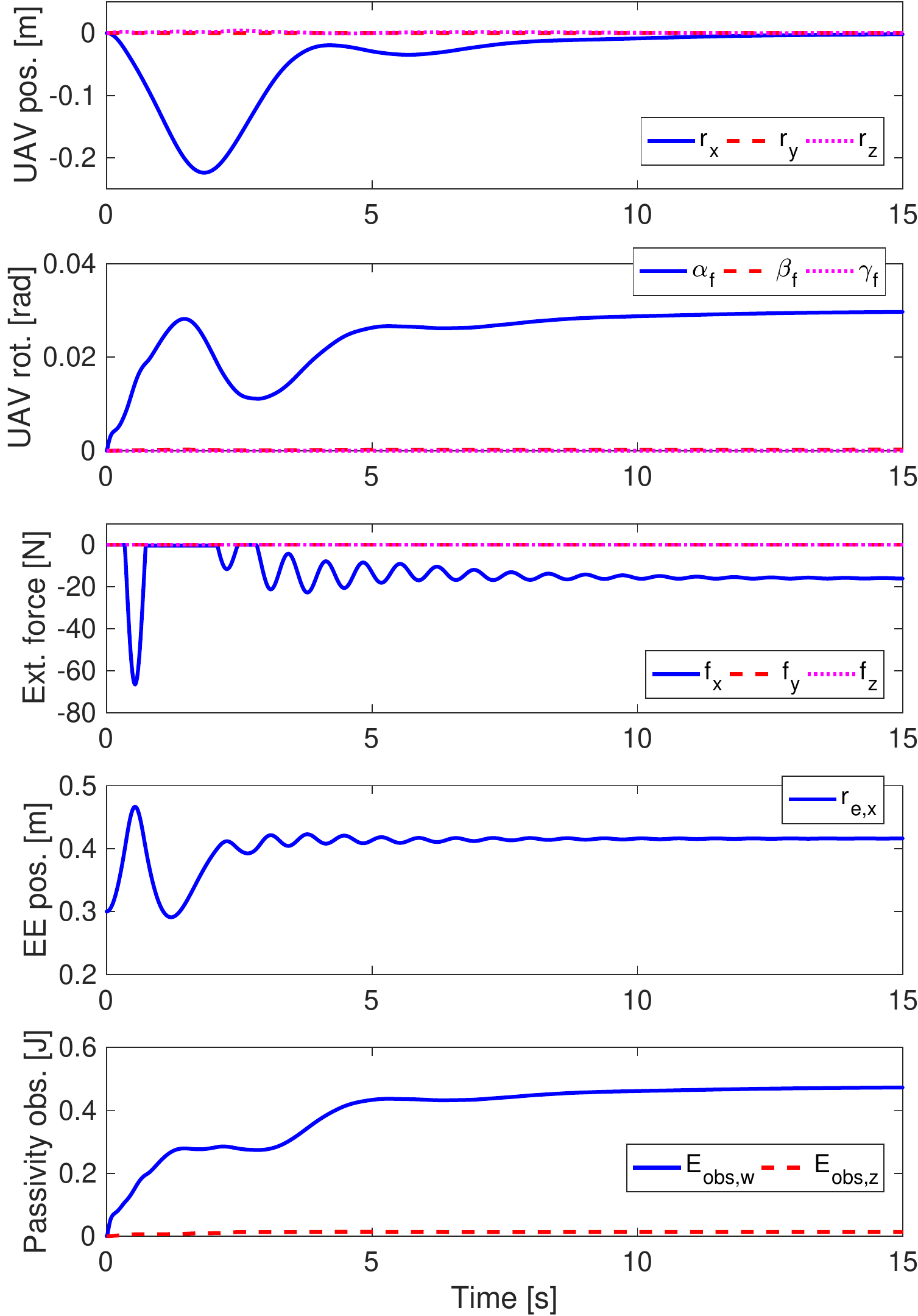}}
	\caption{Simulation result to validate stable environmental interaction for task 1. First and second rows: UAV position and orientation. UAV position converged to the desired even with the environmental interaction, and the orientation converged to a certain value to balance the interaction force. Third row: interaction force due to the contact with the environment (the wall) which is modeled as a pure spring. Fourth row: end-effector position in the global frame (i.e., $r_{e,x}$). Fifth row: Passivity observers. In this simulation, PC was not activated because POs were positive. }
	\label{fig:simul_res_wall}
\end{figure}

\begin{figure}
	\centering
	%	{\includegraphics[scale=0.55]{figures/eps/simul_res_constrained.eps}}
	{\includegraphics[scale=0.54]{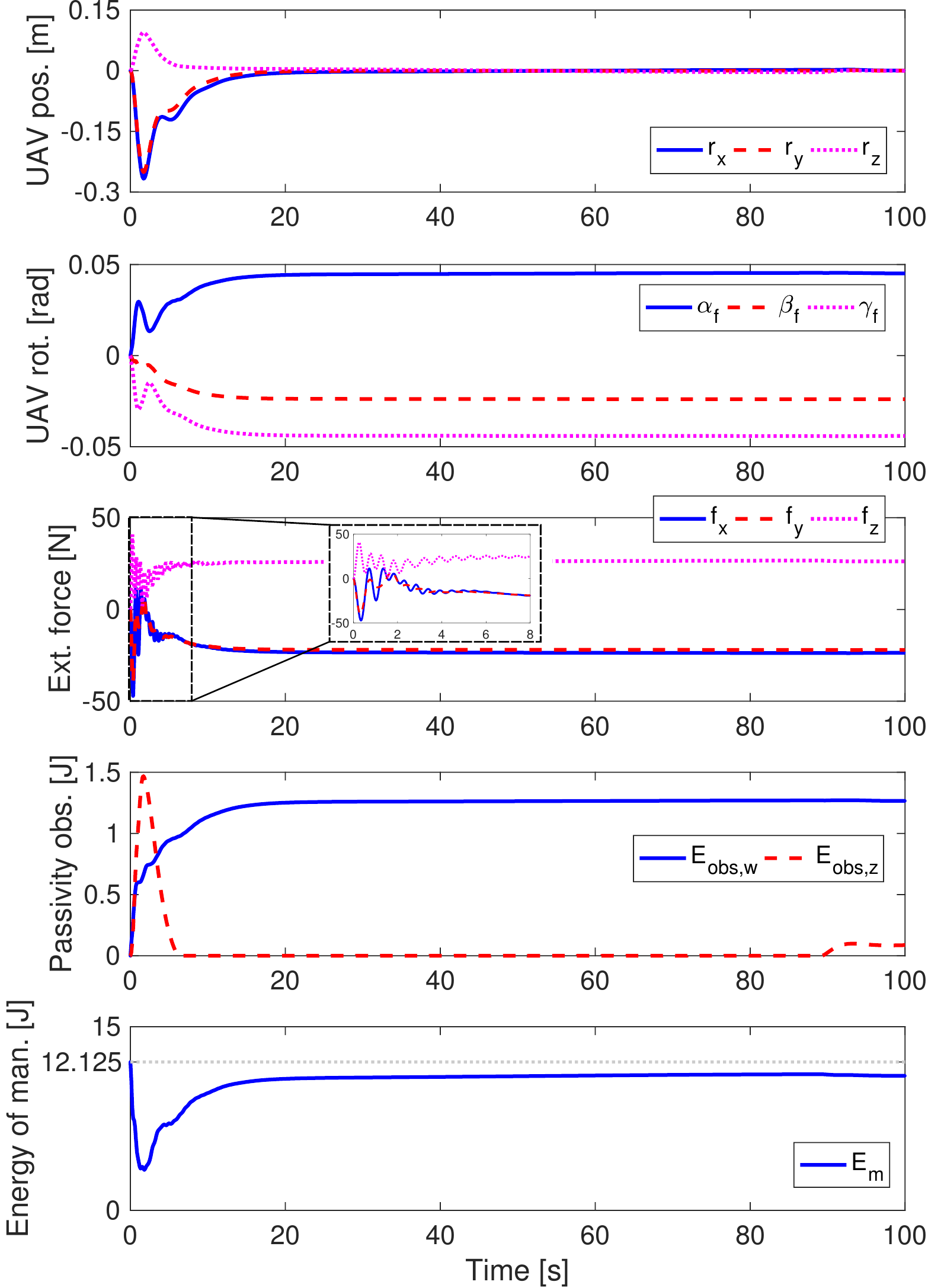}}
	\caption{Simulation result to validate stable environmental interaction for task 2. First and second rows: UAV position and orientation. Similar to previous, the orientation converged to a certain value to balance the interaction force, while position was converged to the desired. Third row: interaction force. Fourth row: passivity observers. PC for body $z$-direction was activated to keep $E_{obs,z}$ positive. Fifth row: Energy of I/O port $(-\bxi_e, {}^{e}\bEta_m)$; 12.125 $\mathrm{J}$ was the initially stored energy. Fourth and fifth row confirm passivity of the controlled UAV-M because every subsystem is passive.}
	\label{fig:simul_res_constrained}
\end{figure}

\bibliographystyle{IEEEtran}
\bibliography{IEEEabrv,[bib]IROS2018_aerial}

\end{document}